\newtheorem{theorem}{Theorem}
\newtheorem{lemma}{Lemma}
\newtheorem{corollary}{Corollary}
\begin{document}
%
\title{Federated Learning with Integrated Sensing, Communication, and Computation: Frameworks and Performance Analysis}
%
%

\author{Yipeng~Liang,~\IEEEmembership{Student~Member,~IEEE,}
        Qimei~Chen,~\IEEEmembership{Member,~IEEE,} 
        and~Hao~Jiang,~\IEEEmembership{Member,~IEEE}
\thanks{The authors are with the School of Electronic Information, Wuhan University, Wuhan 430072, China e-mail: (liangyipeng@whu.edu.cn, chenqimei@whu.edu.cn, jh@whu.edu.com).}
\thanks{This manuscript is a preliminary version of the work and may be subject to further revisions.}
}
\markboth{Journal of \LaTeX\ Class Files,~Vol.~14, No.~8, August~2015}%
{Shell \MakeLowercase{\textit{et al.}}: Bare Demo of IEEEtran.cls for IEEE Journals}
%



\maketitle

\begin{abstract}

With the emergence of integrated sensing, communication, and computation (ISCC) in the upcoming 6G era, federated learning with ISCC (FL-ISCC), integrating sample collection in the sensing process, local training in the computation process, and parameter exchange and aggregation in the communication process, has garnered increasing interest for enhancing training efficiency. Currently, FL-ISCC primarily includes two algorithms: FedAVG-ISCC and FedSGD-ISCC. However, the theoretical understanding of the performance and advantages of FedAVG-ISCC and FedSGD-ISCC remains limited. To address this gap, we investigate a general FL-ISCC framework for wireless networks, implementing both FedAVG-ISCC and FedSGD-ISCC. 
We experimentally demonstrate the substantial potential of the ISCC framework to enhance training efficiency in terms of latency and energy consumption in FL. Furthermore, we provide a theoretical analysis and comparison of the performance of FedAVG-ISCC and FedSGD-ISCC within this framework. The results reveal that: 
1) Both sample collection and communication errors adversely affect the performance of both algorithms, underscoring the need for careful design in practical applications to enhance FL-ISCC performance.
2) Under independent and identically distributed (IID) data, FedAVG-ISCC outperforms FedSGD-ISCC, due to the advantage of multiple local updates.
3) Under non-independent and identically distributed (Non-IID) data,  FedSGD-ISCC demonstrates greater robustness than FedAVG-ISCC. In particular, the local updates in FedAVG-ISCC amplify the impact of Non-IID data, resulting in significant performance degradation as the degree of Non-IID data increases. Conversely, FedSGD-ISCC, which does not incorporate multiple local updates, maintains performance levels comparable to those under IID conditions.
4) FedSGD-ISCC is more robust to communication errors compared to FedAVG-ISCC. Specifically, as communication errors increase, FedAVG-ISCC experiences significant performance deterioration, as its learning rate amplifies the impact of these errors, whereas FedSGD-ISCC remains largely unaffected.
 Extensive simulations verify the effectiveness of our considered FL-ISCC framework and validate the theoretical analysis presented in this work.


\end{abstract}

\begin{IEEEkeywords}
IEEE, IEEEtran, journal, \LaTeX, paper, template.
\end{IEEEkeywords}

%
\IEEEpeerreviewmaketitle

\section{Introduction}

%
%
%
%
%
%


\IEEEPARstart{I}{t} is anticipated that 6G will extend beyond mobile internet to support ubiquitous artificial intelligence (AI) services for Internet of Everything (IoE) applications, such as sustainable cities, connected autonomous systems, brain-computer interfaces, digital twins, extended reality (XR), the metaverse, and e-health \cite{6Gwhitepaper, SaadNetwork2020}. However, existing cloud AI encounters challenges such as high latency, privacy leakage, and limited wireless resources. Consequently, edge AI, a paradigm that shifts AI capabilities from central cloud infrastructure to the network edge, has garnered significant interest by considering the integrated sensing, communication, and computation (ISCC) design \cite{ZhangWC2023, FengNetwork2021, ZhuSCIC2023}. By enabling continuous data acquisition, leveraging distributed computational resources, and employing efficient communication techniques at the edge, ISCC holds considerable potential for enhancing 6G network capabilities, optimizing resource utilization, and facilitating ubiquitous AI services \cite{LetaiefJSAC2022}.

Federated learning (FL) is recognized as a key technology for edge AI due to its privacy-preserving capabilities \cite{McMahanAISTATS2017}. FL enables collaborative model training across decentralized devices without requiring data to be centralized, thus maintaining data privacy and security. This decentralized approach is particularly attractive for intelligent applications in 6G networks, where data privacy and security are critical concerns. Currently, FL is primarily implemented using two algorithms: Federated Averaging (FedAVG) and Federated Stochastic Gradient Descent (FedSGD). FedAVG performs multiple local updates and transmits the updated local model parameters to a central server for global aggregation. In contrast, FedSGD computes the gradient of the local model and transmits these gradient parameters to the server, where they are aggregated before updating the global model. Existing research suggests that FedAVG is more communication-efficient than FedSGD, making it a key focus of ongoing research efforts.\cite{McMahanAISTATS2017, LiSPM2020, ChenTWC2021, WangJSAC2019, JiangTMC2024}.

While the ISCC design offers potential advantages for intelligent applications in 6G, integrating ISCC into existing FL approaches, leading to the development of Federated learning with ISCC (FL-ISCC), requires a reevaluation of their relative performance, which remains insufficiently explored.  
The question of whether FedAVG-ISCC and FedSGD-ISCC can achieve efficiency and cost-effectiveness within the ISCC framework is not well understood. This study addresses this gap by evaluating the performance of these two algorithms under the ISCC framework, focusing on training efficiency and performance analysis. The aim is to provide insights into FL-ISCC, thereby guiding the development of more efficient FL systems for edge AI in the 6G era.


\subsection{Related work}
FL has garnered significant attention for its ability to collaboratively train ML models across distributed devices without share raw data. However, communication overhead remains a major bottleneck due to the frequent exchange of models or gradients between the server and devices, particularly in large-scale scenarios \cite{LiSPM2020}. To mitigate this issue, considerable efforts have been directed towards the co-design of communication and computation \cite{ChenTWC2021, WangJSAC2019, JiangTMC2024}. One intuitive method to reduce communication overhead is to increase the number of local update steps in each communication round. For example, the authors in \cite{WangJSAC2019} analyzed the impact of the number of local steps on the convergence rate, leading to the development of an algorithm that adaptively determines the number of local updates by jointly considering communication and computation resource constraints. Another approach is to reduce the communication volume during each round. For instance, model pruning and quantization have been employed to enhance communication efficiency in FL \cite{JiangTMC2024, ShlezingerICASSP2020}. Specifically, Specifically, considering the heterogeneous capabilities of each device, the authors in \cite{JiangTMC2024} adopted a Multi-Armed Bandit-based online algorithm to determine the pruning ratios for each device after analyzing the impact of these ratios on training performance.  In \cite{ShlezingerICASSP2020}, a quantization scheme based on universal quantization theory was designed, yielding substantial performance gains compared to previous quantization approaches.

Nevertheless, the aforementioned works primarily employed a transmit-then-compute scheme for model or gradient transmission and aggregation over orthogonal resources, which results in average latency. To enhance communication efficiency, over-the-air federated learning (OTA-FL) has emerged as a promising solution \cite{YangTWC2020, ZhuTWC2020}. OTA-FL utilizes a transmit-while-compute scheme, enabling multiple devices to simultaneously transmit and aggregate their models or gradients using the same time-frequency resources through an integrated communication and computation design. In general, the implementation of OTA-FL can be categorized into over-the-air FedSGD (OTA-FedSGD) and over-the-air FedAVG (OTA-FedAVG). In OTA-FedSGD, gradients are computed after a single local step on the full data batch at each device and then aggregated over the air \cite{MohammadiTSP2020}. In OTA-FedAVG, local models are updated through multiple steps with mini-batches of data at each device. Instead of transmitting gradients, OTA-FedAVG uploads the updated models to the server for aggregation over the air \cite{LiTWC2024}.
Recent research has made significant efforts in facilitating OTA-FL in practice \cite{CaoIEEEWC2024}. For example, it has been demonstrated that OTA-FL can achieve privacy "for free" as long as the privacy constraint level is below a threshold that decreases with the signal-to-noise ratio \cite{LiuJSAC2021}. 

However, a critical challenge in OTA-FL is the aggregation error caused by channel noise perturbation, which can lead to transmission distortion and degrade FL performance. 
To mitigate gradient distortion in OTA-FedSGD, several power control approaches have been proposed \cite{ZhangTWC2021, CaoJSAC2022}. Specifically, the authors in \cite{CaoJSAC2022} revealed that training converges to the optimal point if the aggregation errors are unbiased; otherwise, it converges with an error floor. Based on this insight, power control algorithms have been developed for both biased and unbiased cases. In \cite{ZhangTWC2021}, the power control problem was investigated, and the optimal policy was derived in closed-form by taking gradient statistics into account.
To improve the convergence rate of OTA-FedAVG,  the work in \cite{CaoJSAC2022T} designed a joint power allocation and local step control duw to the fact that the convergence behavior is influenced by both the number of local updates and model distortion. Additionally, the authors in \cite{GafniTSP2024} proposed a Bayesian aggregation scheme that accounts for local steps and channel conditions.
However, few of these efforts consider the sensing process for data acquisition, resulting in a lack of investigation into FL-ISCC.

\subsection{Motivation and contribution}
Although the ISCC design has garnered increasing interest for enhancing network capabilities in the forthcoming 6G era \cite{QiTcomm2022, ZhaoTWC2022, HeTWC2024, WenTWC2023, LiTWC2023}, research on FL-ISCC remains in its early stages. Only a few studies have recently explored FL-ISCC \cite{LiuJSTSP2023, LiangICC2023}. Specifically, the work in \cite{LiangICC2023} theoretically analyzed the convergence of FedAVG-ISCC, where the effect of sample size collected in each communication round on convergence rate was explored. Furthermore, the work \cite{LiuJSTSP2023} proposed a joint resource allocation strategy for communication, computation, and sensing in a FedAVG-ISCC system constrained by training latency and energy.
Nevertheless, the theoretical understanding of the performance and advantages of FedAVG-ISCC and FedSGD-ISCC remains an open research area.

Motivated by these issues, we investigate a general FL-ISCC framework as outlined in \cite{LiuJSTSP2023} and \cite{LiangICC2023}. As shown in \ref{fig_sim}, the framework comprises an edge server and multiple devices, each equipped with sensing, communication, and computation capabilities. In each communication round, after updating the global model from the edge server, each device first performs sensing to collect samples from its surrounding environment. Subsequently, the device trains a local ML model using the collected samples and the available computational resources. Finally, efficient parameter aggregation is conducted with the aid of over-the-air computation through a wireless channel.
The main contributions of this work are summarized as follow:

\begin{itemize}

    \item \textbf{ Effectiveness of FL-ISCC:} We first implement both FedAVG and FedSGD algorithms within the proposed ISCC framework and validate its effectiveness in reducing model training latency and energy consumption, thereby significantly enhancing the training efficiency of FL.
    
	\item \textbf{Convergence analysis for FL-ISCC:} We then analyze the convergence performance of both FedAVG-ISCC and FedSGD-ISCC, considering the impact of sample collection size and communication errors in each communication round. Our analysis indicates that both the sample collection strategy and aggregation errors can deteriorate convergence performance. 
	
	
	\item \textbf{Performance Comparison:} Based on our convergence analysis, we further compare the performance of FedAVG-ISCC and FedSGD-ISCC. The results indicate that FedAVG-ISCC outperforms FedSGD-ISCC in IID settings due to the advantage of multiple local updates. However, in Non-IID settings, these local updates in FedAVG-ISCC amplify the effects of Non-IID data, leading to significant performance degradation. In contrast, FedSGD-ISCC, which does not involve multiple local updates, avoids this issue and is therefore more robust in Non-IID settings compared to FedAVG-ISCC. Additionally, the learning rate in FedAVG-ISCC can exacerbate communication errors, resulting in degraded performance. FedSGD-ISCC, which does not face these challenges, proves to be more robust to communication errors than FedAVG-ISCC.
	
	\item \textbf{Performance evaluation:} Finally, extensive simulations are conducted to evaluate the proposed FL-ISCC design. The results confirm the superiority of our framework in enhancing training efficiency and validate the theoretical analysis.
	
\end{itemize}

\begin{figure}[!t]
\centering
\includegraphics[width=0.9\linewidth]{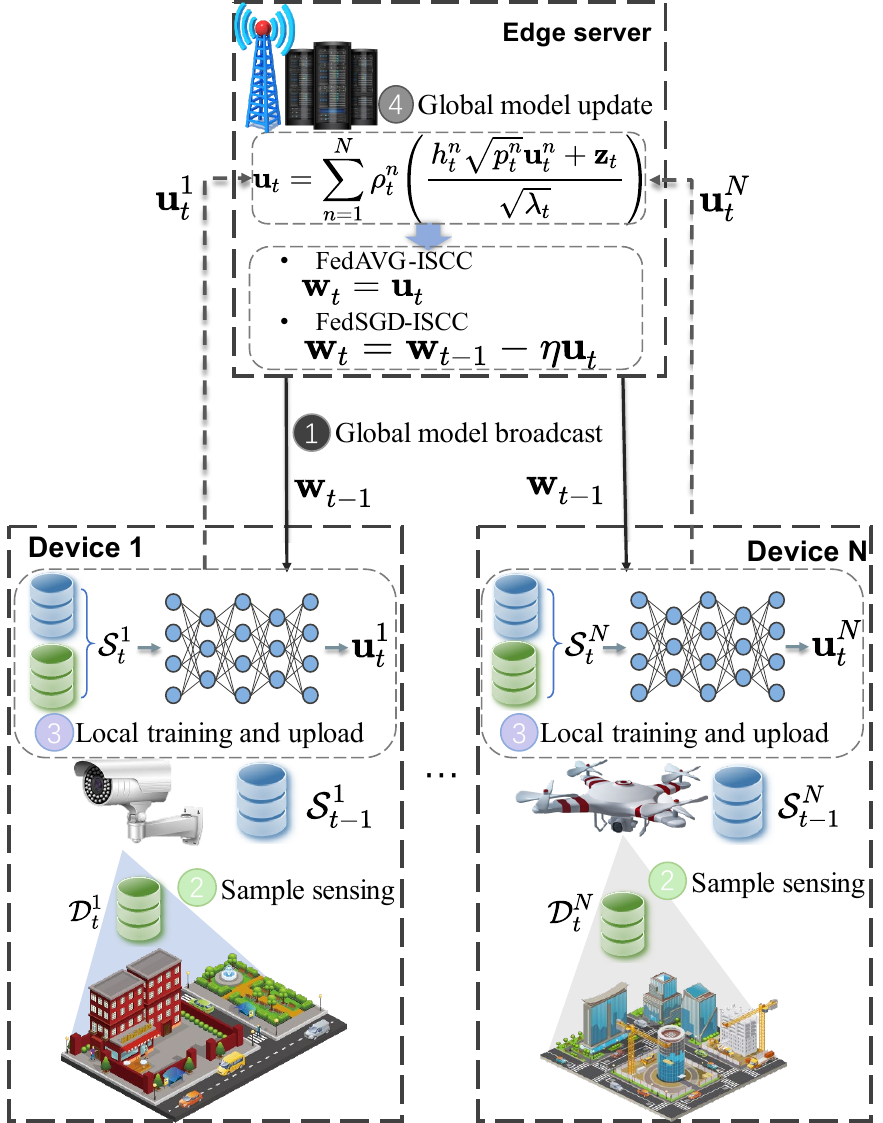}
\caption{Illustration of the  proposed SC$^2$-FEEL.}
\label{fig_sim}
\end{figure}

\section{System model}
In this section, we introduce the FL-ISCC system as shown in Fig. \ref{fig_sim}, where sensing, computation, and communication are
jointly considered. We first introduce the proposed FL-ISCC framework. Subsequently, we present the learning models of FedAvg and FedSGD, respectively. Thereafter, we present the communication model and computation model, respectively.

\subsection{FL-ISCC framework}
In this work, we consider an FL-ISCC framework comprising a single edge server and a set of N edge devices, denoted by $\mathcal{N} \triangleq \left\lbrace 1, 2, ..., N \right\rbrace $, collaborating to train a desired ML model.
We assume that both the edge server and the devices are equipped with a single antenna for parameter transmission. Each device is also capable of collecting samples from its surrounding environment, enabling the ML model to be trained on datasets continuously accumulated at the devices throughout the training process.

As illustrated in Fig. 1, the desired ML model, denoted by $\mathbf{w} \in \mathbb{R}^q$, where $q$ represents the model size, is trained over $T$ communication rounds. During any communication round $t\in \mathcal{T}\triangleq \{1, 2, ..., T\}$, the model parameters are updated through four key steps, detailed as follows:
\begin{itemize}
	\item[(1)] \textbf{Global model broadcast:} The edge server broadcasts the current global model parameters $\mathbf{w}^{}_{t-1}$ to all devices. Each device then updates its local model parameters $\mathbf{w}^{n}_{t}$ based on the received global parameters, i.e., $\mathbf{w}^{n}_{t} = \mathbf{w}^{}_{t-1}$.
	
	\item[(2)] \textbf{sample sensing:} Each device $n$ collects new samples $\mathcal{D}^{n}_{t}$ with the size of $D^{n}_{t} = | \mathcal{D}^{n}_{t} |$ from its surrounding environment. As a result, device $n$ maintains a cumulative dataset $\mathcal{S}^{n}_{t}$ that includes both the newly collected dataset $\mathcal{D}^{n}_{t}$ from the current round and the cumulative dataset $\mathcal{S}^{n}_{t-1} = \sum_{i=1}^{t-1}\mathcal{D}^{n}_{i}$ from the previous rounds, i.e., $\mathcal{S}^{n}_{t} = \mathcal{S}^{n}_{t-1}+\mathcal{D}^{n}_{t}$.
	
	\item[(3)] \textbf{Local training:} Based on the cumulative dataset $\mathcal{S}^{n}_{t}$, each device $n$ trains its local ML model. Let $F\left(\mathbf{w}^{n}_{t}; \mathcal{S}^{n}_{t}\right) $  denote the loss function for device $n$ over dataset $\mathcal{S}^{n}_{t}$, which is expressed as
    \begin{equation}\label{localLoss}
    	F\left(\mathbf{w}_t^n ; \mathcal{S}_t^n\right)=\frac{1}{S_t^n} \sum_{\left(\mathbf{x}_j, y_j\right) \in \mathcal{S}_t^n} f\left(\mathbf{w}_t^n,\left(\mathbf{x}_j, y_j\right)\right),
    \end{equation}
    where $\left(\mathbf{x}_{j}, y_j \right) $ is the $j$-th sample of dataset $\mathcal{S}_{t}^{n}$ with data $\mathbf{x}_{j}$ and label $y_j$, $f\left(\mathbf{w}_t^n ; \mathcal{S}_t^n\right)$ is the $j$-th sample-wise loss function, $S^{n}_{t} = | \mathcal{S}^{n}_{t} |$ is the size of dataset $\mathcal{S}^{n}_{t}$, and $S^{n}_{t} = S^{n}_{t-1}+ D^{n}_{t}$. Based on this loss function, both FedAVG and FedSGD perform local training to update the parameters $\mathbf{u}^{n}_{t}$. 

	\item[(4)] \textbf{Global model update:} After local training, all devices synchronously upload their updated local parameters $\mathbf{u}^{n}_{t}$ to the edge server for aggregation. The edge server then aggregates these parameters as $\mathbf{u}^{}_{t}$, given by
    \begin{equation}\label{AggregatedParameter}
        \mathbf{u}^{}_{t} = \rho^{n}_{t} \mathbf{u}^{n}_{t}.
    \end{equation}
    where $\rho^{n}_{t} = \frac{S^{n}_{t}}{S_{t}}$ with $S_{t} = \sum_{n=1}^{S^{n}_{t}}$ representing the total size of the accumulated dataset across all $N$ devices in the $t$-th round.
     Then, the global model $\mathbf{w}^{}_{t}$ is updated based on $\mathbf{u}^{n}_{t}$, resulting in the global loss function as
\begin{equation}\label{finalloss}
	F\left(\mathbf{w}_{t}^{ }; \mathcal{S}_{t}\right) =\sum_{n =1}^{N}\rho^{n}_{t} F\left(\mathbf{w}_t^n ; \mathcal{S}_t^n\right).
\end{equation}
The global model update for FedAVG and FedSGD will be detailed in the subsequent subsection.
\end{itemize}
This iterative process repeats over $T$ communication rounds, optimizing the model parameter $\mathbf{w}^{}_{t}$ to satisfy the following condition:
\begin{equation}
	\mathbf{w}^* \triangleq \arg \min _{\mathbf{w}} F\left(\mathbf{w}_T ; \mathcal{S}_T\right).
\end{equation}

\subsection{Global model update for FedAVG-ISCC}

In FedAVG, each device $n$ updates its local model $\mathbf{w}^{n}_{t}$ through multiple epochs using a stochastic gradient descent (SGD) method. Specifically, each device randomly selects mini-batches of samples $\xi^n$ from its dataset $\mathcal{S}_{t}^{n}$ for several local updates. The updated local model parameters $\mathbf{w}^{n}_{t}$ are then uploaded to the server for aggregation. Thus, the parameters $\mathbf{u}^{n}_{t}$ to be uploaded are the updated local model $\mathbf{w}^{n}_{t}$, which is expressed as
\begin{equation}
	\begin{aligned}
		\mathbf{u}^{n}_{t} = \mathbf{w}_{t}^n & =\mathbf{w}_{t-1}-\eta \sum_{i=1}^{\tau_{t}} \nabla F\left(\mathbf{w}_{t-1,i}^n ; \xi_{}^{n}\right), i=1,2, \ldots \tau_{}, \\
	\end{aligned}
\end{equation}
where $\eta$ is the learning rate, $\tau_{}$ is the number of local updates. 
As a result, the global model is updated based on \eqref{AggregatedParameter} as
\begin{equation}\label{fedavg_aggre}
	\mathbf{w}^{}_{t} = \mathbf{u}^{}_{t} = \sum_{n =1}^{N}\rho^{n}_{t}\mathbf{w}^{n}_{t},
\end{equation}

\subsection{Global model update for FedSGD-ISCC}

Unlike FedAVG, which performs multiple updates on the local model, FedSGD calculates and uploads the gradients of the local model, denoted by $\nabla F\left(\mathbf{w}_{t}^n ; \mathcal{S}^{n}_{t}\right)$, using the full batch of dataset $\mathcal{S}^{n}_{t}$. Therefore, the parameters to be uploaded are given by $\mathbf{u}^{n}_{t} = \nabla F\left(\mathbf{w}_{t}^n ; \mathcal{S}^{n}_{t}\right)$.

All devices then transmit their local gradient parameters to the edge server for aggregation. Consequently, \eqref{AggregatedParameter} is reformulated as:
\begin{equation}\label{globalGrad}
	\mathbf{u}^{}_{t} = \sum_{n=1}^{N} \rho^{n}_{} \nabla F\left(\mathbf{w}_{t-1}^{n} ; \mathcal{S}^{n}_{t}\right). 
\end{equation}

Based on \eqref{globalGrad}, the global model $\mathbf{w}_{t}$ at the edge server is updated as
\begin{equation}\label{Update_fedsgd}
    \begin{split}
	\mathbf{w}_{t} & = \mathbf{w}_{t-1} - \eta \mathbf{u}^{}_{t}  = \mathbf{w}_{t-1} - \eta \sum_{n=1}^{N} \rho^{n}_{} \nabla F\left(\mathbf{w}_{t-1}^{n} ; \mathcal{S}^{n}_{t}\right).
    \end{split}
\end{equation}

\subsection{Communication Model}

In this work, we employ over-the-air computation for communication-efficient parameter transmission in FL. 
Assume that $\hat{h}^{n}_{t}$ is the complex channel coefficient from device $n$ to the edge server during $t$-th communication round. The magnitude of this coefficient, $h^{n}_{t} = |\hat{h}^{n}_{t}|$, can be estimated by each device. Let the parameters transmitted from each device to the server be denoted by $\mathbf{u}^{n}_{t}$.
The received signal (after phase compensation) at the edge server is given by
\begin{equation}
	\mathbf{y}^{\text{comm}}_{t} = \sum_{n=1}^{N} \left(h^{n}_{t} \sqrt{p^{n}_{t}}\mathbf{u}^{n}_{t} + \mathbf{z}_{t} \right) 
\end{equation}
where $p^{n}_{t}$ represents the transmit power scaling factor of device $n$ at $t$th communication round, and $\mathbf{z}_{t} \in \mathbb{R}^q$ denotes the additive white Gaussian noise, following a Gaussian distribution with zero mean and $\sigma_z$, i.e., $\mathbf{z}_{t} \sim \mathcal{CN}\left( 0, \sigma_z\mathbf{I}\right) $. 
For FedAVG, the transmitted parameters are the local model parameters, thus $\mathbf{u}^{n}_{t} = \mathbf{w}_{t}^{n}$. 
For FedSGD, the transmitted parameters are the local gradients, thus $\mathbf{u}^{n}_{t} = \nabla F\left(\mathbf{w}_{t}^n ; \mathcal{S}^{n}_{t}\right)$.

Note that the over-the-air computation transmits the signals in an analog manner. Specifically, each element of the parameters is modulated as a single analog symbol for transmission. Therefore, to achieve the parameter transmission, the total number of analog symbols to be transmitted is $q$ for a parameter size of $q$. Let $L$ represent the number of symbols in each resource block with duration $T_{slot}$. As a result, for $t$-th communication round, the transmission latency is thus expressed as
\begin{equation*}
	t^{\text{comm}}_t = ceil\left( \frac{q}{L} \right) T_{slot}
\end{equation*}
where $ceil(.) $ is the integer ceiling function, $T_{slot} = 1$ ms and $L$ = 14.
Therefore, the communication energy consumption of device $n$ is given by $e^{\text{comm}}_{n,k} = p^{k}_{n} t^{\text{comm}}_k$.

To recover the parameters of interest from the wireless channels, we apply a denoising factor  $\lambda_t$ to the received signals, 
\begin{equation}
	\mathbf{u}_{t} = \sum_{n=1}^{N} \frac{\rho^{n}_{t} \mathbf{y}^{\text{comm}}_t}{\sqrt{\lambda_t}}=  \sum_{n=1}^{N} \rho^{n}_{t} \left( \frac{ h^{n}_{t} \sqrt{p^{n}_{t}}\mathbf{u}^{n}_{t} + \mathbf{z}_{t}}{\sqrt{\lambda_t}} \right). 
\end{equation}
The edge server aims to obtain the ML model parameter vector unaffected by noise, i.e.,
\begin{equation}
	\tilde{\mathbf{u}}_t = \sum_{n=1}^{N}\rho^{n}_{t} \mathbf{u}^{n}_{t}.
\end{equation}

As a result, the communication error $\varepsilon_t$ during $t$-th communication round is defined to quantify the model parameter vector distortion caused by channel noise, which is shown as
\begin{equation}\label{commerr}
	\begin{split}
		\boldsymbol{\varepsilon}_t &= \mathbf{u}_{t} - \tilde{\mathbf{u}}_{t}\\
		& = \sum_{n=1}^{N}\rho^{n}_{t}\left( \frac{h^{n}_{t}\sqrt{p^{n}_{t}}}{\sqrt{\lambda_t}} -1 \right)\mathbf{u}^{n}_{t} + \frac{1}{\sqrt{\lambda_t}} \mathbf{z}_{t}.
	\end{split}
\end{equation}

\subsection{Computation Model}

In $t$-th communication round, device $n$ performs local training with dataset $\mathcal{S}^{n}_{t}$. Let $\xi^n$ denote the number of CPU cycles for device $n$ to execute one sample. $f^{n}_{t}$ denote the CPU-cycle frequency of the device $n$. $\varsigma^n$ is the energy consumption coefficient depending on the chip of each device. Then, the computation latency of device $n$ for one local training epoch is given by
\begin{equation*}
	t^{n,\text{comp}}_{t} = \frac{\xi_n\sum_{i=1}^{t}D^n_i}{f^{n}_{t}} = \frac{\xi^n S^{n}_{t}}{f^{n}_{t}}.
\end{equation*}

Meanwhile, the computation energy consumption of device $n$ for one local training epoch can be expressed as: 
\begin{equation*}
	e^{n,\text{comp}}_{t}=\xi^n \varsigma^n (f^{n}_{t})^{2} \sum_{k=1}^{k}D^{n}_{i} = \xi_n \varsigma^n (f^{n}_{t})^{2} S^{n}_{t}.
\end{equation*}

Based on the communication model and computation model, we can respectively derive the training latency and energy consumption as
\begin{equation*}
	t_{t} = \max_{n \in \mathcal{N}} \{t^{n,\text{comp}}_{t} \} + t^{\text{comm}}_t,
\end{equation*}
and 
\begin{equation*}
	e_{t} = \sum_{n=1}^{N}\left(e^{n,\text{comp}}_{t} + e^{n,\text{comm}}_{t} \right).
\end{equation*}
\section{Convergence Analysis and Performance Evaluation}
In this section, we provide the convergence analysis of the proposed FL-ISCC, i.e., both FedAVG-ISCC and FedSGD-ISCC. Based on the convergence results, we compare the performance between FedAVG-ISCC and FedSGD-ISCC from a theoretical perspective. 

\subsection{Assumptions and preliminaries} %
To facilitate the convergence analysis, we first introduce the following assumptions for the loss functions, which are commonly adopted in existing works, such as \cite{WangJSAC2019, CaoJSAC2022, WangJMLR2021, YuICML2019, WangNeuIPS2020}.  

\textbf{Assumption 1 (L-smoothness).} 
The loss function, $F(\mathbf{w}_t; \mathcal{S}_t), \forall t$, is either continuously differentiable or Lipschitz continuous with a non-negative Lipschitz constant $L \geq 0$, which can be formulated as  
\begin{equation}\label{Lsmooth}
\begin{split}
	F(\mathbf{w}_t; \mathcal{S}_t) &\leq F(\mathbf{v}_t; \mathcal{S}_t)+ \left\langle \nabla F (\mathbf{v}_t; \mathcal{S}_t), (\mathbf{w}_t-\mathbf{v}_t) \right\rangle \\
  & ~~~ + \frac{L}{2}||\mathbf{w}_t-\mathbf{v}_t||^2, \forall \mathbf{w}_t, \mathbf{v}_t \in \mathbb{R}^q, t,
\end{split}
\end{equation}
where $\nabla F(\mathbf{v}_t; \mathcal{S}_t)$ denotes the gradient of $F(\mathbf{v}_t; \mathcal{S}_t)$.

\textbf{Assumption 2 (Gradient bound).} 
For any dataset $\mathcal{S}_t$ at $t$-th communication round, the expected squared norm of gradient $\nabla F(\mathbf{w}_t; \mathcal{S}_t)$ is bounded by a positive constant $G_t$, namely,
\begin{equation}
	\mathbb{E}\left( \left\| \nabla F(\mathbf{w}_{t};\mathcal{S}_{t}) \right\|^2\right) \leq G_t.
\end{equation}

\textbf{Assumption 3 (Unbiased gradient and bounded variance).} 
For each device $n$, the stochastic gradient is unbiased, i.e., $\mathbb{E} \left(\nabla F\left(\mathbf{w}_{t-1}^n ; \xi_{t}^{n}\right) \right) = \nabla F\left(\mathbf{w}_{t-1}^n ; \mathcal{S}_{t}^{n}\right) $. Moreover, 
the variance of stochastic gradients of each client is bounded by
\begin{equation}
	\mathbb{E} \left(  \nabla F\left(\mathbf{w}_{t-1}^n ; \xi_{t}^{n}\right) - \nabla F\left(\mathbf{w}_{t-1}^n ; \mathcal{S}_{t}^{n}\right) \right)  \leq \sigma_{}^{2}, 
\end{equation}
where $\sigma_{n}^{2}$ is a non-negative constant.

\textbf{Assumption 4 (Bounded dissimilarity).} 
For any $F(\mathbf{w}_t; \mathcal{S}_t)$ and $\nabla F\left(\mathbf{w}_{t-1}^{n} ; \mathcal{S}_{t}^{n}\right)$, there exist constants $\alpha^{2} \geq 1$ and $\beta^{2} \geq 0$ to quantify the degree of non independent and identically distributed (Non-IID) datasets such that
\begin{equation}
    \sum_{n = 1}^{N} \rho^{n} \left\| \nabla F\left(\mathbf{w}_{t-1}^{n} ; \mathcal{S}_{t}^{n}\right) \right\|^2 \leq \alpha^{2} \left\| F(\mathbf{w}_t; \mathcal{S}_t) \right\|^2  + \beta^{2}.
\end{equation}
If the datasets among devices are IID setting, then we have $\alpha^{2} = 1$ and $\beta^{2} = 0$.


In both FedAVG-ISCC and FedSGD-ISCC, the model parameter $\mathbf{w}^{n}_{t}$ is updated based on the cumulative dataset $\mathcal{S}_{t-1}^n$ and the newly sensed dataset $\mathcal{D}^{n}_{t}$ in each round. Therefore, it is essential to discuss the impact of these datasets on the improvement of the global loss function. To this end, we first examine how these datasets affect the gradients used for model updating. This leads us to introduce Lemma \ref{lemma1}.

\begin{lemma}\label{lemma1}
	Given the datasets $\mathcal{S}^{n}_{t-1}$ and $\mathcal{D}^{n}_{t}$ in the $t$-th communication round, the aggregated gradient $\sum_{n=1}^{N} \rho^{n} \nabla F(\mathbf{w}^{n}_{t-1}; \mathcal{S}^{n}_{t})$ satisfies the following equation.
	\begin{equation}
            \begin{split}
              &\sum_{n=1}^{N} \rho^{n} \nabla F(\mathbf{w}^{n}_{t-1}; \mathcal{S}^{n}_{t}) = \\ 
              &\frac{S_{t-1}}{S_{t}} \sum_{n=1}^{N} \Bar{\rho}^{n} \nabla F(\mathbf{w}^{n}_{t-1}; \mathcal{S}^{n}_{t-1}) 
              + \frac{D_t}{S_{t}} \sum_{n=1}^{N} \Tilde{\rho}^{n} \nabla F(\mathbf{w}^{n}_{t-1}; \mathcal{D}^{n}_t),
            \end{split}
	\end{equation}
	where $D_t = \sum^{N}_{n=1} D^{n}_{t}$, $\Bar{\rho}^{n} = \frac{{S}^{n}_{t-1}}{{S}^{}_{t-1}}$ and $\Tilde{\rho}^{n} = \frac{{D}^{n}_{t}}{{D}^{}_{t}}$.
\end{lemma}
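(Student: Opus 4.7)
The plan is to prove the identity by a direct manipulation based on the additive structure of the empirical loss \eqref{localLoss} and the disjoint-union decomposition $\mathcal{S}^n_t = \mathcal{S}^n_{t-1} \cup \mathcal{D}^n_t$ implied by the recursion $\mathcal{S}^n_t = \mathcal{S}^n_{t-1} + \mathcal{D}^n_t$ introduced in the FL-ISCC framework. The key structural observation is that the gradient operator commutes with finite sums over samples, so decomposing the dataset translates directly into decomposing the gradient with weights equal to the corresponding sample-size ratios.

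First, I would apply the definition of $F$ in \eqref{localLoss} to the cumulative dataset and split the sample-wise sum into contributions from $\mathcal{S}^n_{t-1}$ and $\mathcal{D}^n_t$. Using $S^n_t = S^n_{t-1} + D^n_t$, this yields the per-device identity
\begin{equation*}
\nabla F(\mathbf{w}^n_{t-1}; \mathcal{S}^n_t) = \frac{S^n_{t-1}}{S^n_t}\,\nabla F(\mathbf{w}^n_{t-1}; \mathcal{S}^n_{t-1}) + \frac{D^n_t}{S^n_t}\,\nabla F(\mathbf{w}^n_{t-1}; \mathcal{D}^n_t).
\end{equation*}
Second, I would multiply both sides by $\rho^n = S^n_t/S_t$ and sum over $n$. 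The factors $S^n_t$ cancel, leaving coefficients $S^n_{t-1}/S_t$ in front of the old-data gradient term and $D^n_t/S_t$ in front of the new-data gradient term.

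Finally, I would factor out the global normalizations $S_{t-1}/S_t$ and $D_t/S_t$ to renormalize the inner sums, using $S_{t-1} = \sum_n S^n_{t-1}$ and $D_t = \sum_n D^n_t$. This produces the weights $\bar{\rho}^n = S^n_{t-1}/S_{t-1}$ and $\tilde{\rho}^n = D^n_t/D_t$ inside the two inner sums, matching the right-hand side of the lemma exactly.

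There is really no hard step here: the result is essentially a bookkeeping identity that follows from linearity of the gradient and the disjointness of $\mathcal{S}^n_{t-1}$ and $\mathcal{D}^n_t$. The only point requiring minor care is to verify that the cumulative dataset is defined as the disjoint union (so that no sample is double-counted), which is guaranteed by the system model description $\mathcal{S}^n_t = \mathcal{S}^n_{t-1} + \mathcal{D}^n_t$ together with $S^n_t = S^n_{t-1} + D^n_t$. Consequently, I expect the proof to be short and purely algebraic.
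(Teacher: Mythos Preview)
Your proposal is correct and follows essentially the same approach as the paper: the paper also uses the definition \eqref{localLoss} to split the per-device loss over the disjoint union $\mathcal{S}^n_{t-1}\cup\mathcal{D}^n_t$, multiplies by $\rho^n$, sums over $n$, and refactors the weights into $\bar\rho^n$ and $\tilde\rho^n$. The only cosmetic difference is that the paper carries out the manipulation at the level of the loss $F$ and then differentiates at the end, whereas you work directly with $\nabla F$; by linearity of the gradient these are equivalent.
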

\begin{proof}[proof]
%
%
	
		Please refer to Appendix \ref{AppenA}.
\end{proof}

\subsection{Convergence analysis for FedAVG-ISCC}
In this subsection, we analyze the convergence rate of FedAVG-ISCC. We first introduce Lemma \eqref{lemma2} to show the upper bound of the improvement of the global loss function based on Lemma \ref{lemma1}.
\begin{lemma}\label{lemma2}
When the learning rate $\eta$ satisfies $0 \leq  2 L^2 \eta^2 \tau_{}\left( \tau_{} -1\right) \leq \min \{\frac{1}{5} ,\frac{S^{2}_{t}}{S^{2}_{t} + 4 S^{2}_{t-1}} \} $ in the $t$-th communication round, the improvement of the global loss function is bounded by \eqref{LossImprove1}.
	\begin{figure*}[htbp] 
		\centering
\begin{equation}\label{LossImprove1}
    \begin{aligned}
         & F\left(\mathbf{w}_{t}; \mathcal{S}_{t}\right) - F\left(\mathbf{w}_{t-1};\mathcal{S}_{t-1}\right)  \le
         \begin{cases}
             \begin{matrix} 
                       -  \frac{\eta\tau_{}}{4} \left( 2 -  \alpha^2  \right) \left\| \nabla F(\mathbf{w}_{0};\mathcal{D}_{1}) \right\|^2  + L \tau^{}_{} \eta^2 \sigma^2 \sum_{n=1}^{N} \left( \rho^{n}\right)^2 +  \frac{ 1 }{ \eta^{}_{} \tau^{}_{}} \left\| \boldsymbol{\varepsilon}_t \right\|^2 \\
                       + \frac{ 5L^2 \eta^{3}_{} \sigma^2 \tau^{}_{}  \left( \tau_{} -1\right)}{4}+ \frac{ \beta^2  \eta^{}_{}\tau^{}_{} }{4}
             \end{matrix}, 
             & \mathrm{if} ~~t=1,\\
             \begin{matrix}
                    -\frac{\tau^{}_{} \eta}{ 4 } \left( 2 -      \alpha^2    \right)  \left\| \nabla F(\mathbf{w}_{t-1};\mathcal{S}_{t-1}) \right\|^2  + L \tau^{}_{} \eta^{2}_{} \sigma^2 \sum_{n=1}^{N} \left( \rho^{n}\right)^2 +  \frac{ 1 }{ \eta^{}_{} \tau^{}_{}} \left\| \boldsymbol{\varepsilon}_t \right\|^2  \\
                    +  \left( 1+ \frac{ S^{2}_{t} }{ 4 S^{2}_{t-1} } \right)  L^2 \eta^3 \sigma^2 \tau_{} \left( \tau_{} -1\right) + \frac{ \eta^{}_{}\tau^{}_{}S^{2}_{t}  }{4S^{2}_{t-1}}  \beta^2   +  \frac{ \eta^{}_{}\tau^{}_{} \alpha^2 }{4}  \frac{D^{2}_{t}}{S^{2}_{t-1}} G_{t}
             \end{matrix},
             &\mathrm{otherwise}.
         \end{cases}
 \end{aligned}
\end{equation}
	\end{figure*}

\end{lemma}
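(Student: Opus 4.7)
The plan is to carry out a one-round descent analysis of FedAVG adapted to the ISCC setting, where the training set grows each round and the aggregation is perturbed by the over-the-air error $\boldsymbol{\varepsilon}_t$ from \eqref{commerr}. First I would apply Assumption 1 to $F(\cdot;\mathcal{S}_t)$ at the point $\mathbf{w}_{t-1}$, and substitute the global update
\begin{equation*}
  \mathbf{w}_t - \mathbf{w}_{t-1} = -\eta\sum_{i=0}^{\tau-1}\sum_{n=1}^{N}\rho^{n}_{t}\,\nabla F(\mathbf{w}^{n}_{t-1,i};\xi^{n}) + \boldsymbol{\varepsilon}_t
\end{equation*}
obtained from \eqref{fedavg_aggre} combined with the over-the-air error model. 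Taking expectation over the mini-batches (Assumption 3 removes bias) and over the channel noise then turns the right-hand side into a descent term, a client-drift term, a variance term, and a cross term with $\boldsymbol{\varepsilon}_t$.

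Next, I would handle the mismatch $F(\mathbf{w}_{t-1};\mathcal{S}_t) - F(\mathbf{w}_{t-1};\mathcal{S}_{t-1})$ that appears after moving to the left-hand side of \eqref{LossImprove1}. For $t\ge 2$ I would use Lemma \ref{lemma1} together with the identity $F(\cdot;\mathcal{S}_t) = \tfrac{S_{t-1}}{S_t}F(\cdot;\mathcal{S}_{t-1}) + \tfrac{D_t}{S_t}F(\cdot;\mathcal{D}_t)$ to rewrite $\nabla F(\mathbf{w}_{t-1};\mathcal{S}_t)$ as a convex combination of gradients on $\mathcal{S}_{t-1}$ and on $\mathcal{D}_t$; the $\mathcal{D}_t$ branch, bounded via Assumption 2, is what eventually produces the $\tfrac{\eta\tau\alpha^2}{4}\tfrac{D_t^2}{S_{t-1}^2}G_t$ contribution and the $\tfrac{S_t^2}{4S_{t-1}^2}\beta^2$ factor attached to the non-IID term. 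For $t=1$ the prior dataset is empty, so $F(\mathbf{w}_0;\mathcal{S}_1)=F(\mathbf{w}_0;\mathcal{D}_1)$ and the descent term appears directly as $-\tfrac{\eta\tau}{4}(2-\alpha^2)\|\nabla F(\mathbf{w}_0;\mathcal{D}_1)\|^2$, giving the first branch of \eqref{LossImprove1}.

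For the inner product, I would add and subtract $\nabla F(\mathbf{w}^{n}_{t-1,i};\mathcal{S}^n_{t-1})$, apply Young's inequality to split it into a descent contribution on $\|\nabla F(\mathbf{w}_{t-1};\mathcal{S}_{t-1})\|^2$ and a client-drift contribution $\sum_{n,i}\rho^n_t\,\mathbb{E}\|\mathbf{w}^n_{t-1,i}-\mathbf{w}_{t-1}\|^2$, and then invoke Assumption 4 so that the drift is dominated by $\alpha^2\|\nabla F\|^2+\beta^2$; this is what produces the $(2-\alpha^2)/4$ prefactor. The drift itself is bounded recursively from the local SGD recursion using Assumption 1, and the first learning-rate condition $2L^2\eta^2\tau(\tau-1)\le 1/5$ absorbs the self-referencing drift terms that appear on both sides, leaving the $5L^2\eta^3\sigma^2\tau(\tau-1)$ residual. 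The quadratic term $\tfrac{L}{2}\|\mathbf{w}_t-\mathbf{w}_{t-1}\|^2$ is expanded with $\|a+b\|^2\le 2\|a\|^2+2\|b\|^2$ into an aggregated-gradient part and a $\|\boldsymbol{\varepsilon}_t\|^2$ part, and a Young constant of order $\eta\tau$ on the cross term with $\boldsymbol{\varepsilon}_t$ is chosen so that the communication-error contribution collapses exactly to the stated $\|\boldsymbol{\varepsilon}_t\|^2/(\eta\tau)$; the second learning-rate condition $2L^2\eta^2\tau(\tau-1)\le S_t^2/(S_t^2+4S_{t-1}^2)$ is then used to absorb the $\mathcal{D}_t$ branch of Lemma \ref{lemma1}, yielding the $(1+S_t^2/(4S_{t-1}^2))$ prefactor of the drift residual in the $t\ge 2$ branch.

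The hard part will be the bookkeeping: several Young's inequalities with mutually compatible constants are needed to simultaneously enforce the $(2-\alpha^2)/4$ descent coefficient, the $1/(\eta\tau)$ communication-error coefficient, and the two $S_t/S_{t-1}$-dependent factors, while respecting both learning-rate bounds of the lemma. A secondary obstacle is the $t=1$ case, in which $S_{t-1}=0$ makes the ratios $S_t^2/S_{t-1}^2$ ill-defined and forces a self-contained argument that replaces the two-dataset decomposition with direct manipulation on $\mathcal{D}_1$, which is precisely why the first branch of \eqref{LossImprove1} drops the $D_t^2/S_{t-1}^2 G_t$ term and pivots to $\|\nabla F(\mathbf{w}_0;\mathcal{D}_1)\|^2$.
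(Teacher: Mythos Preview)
Your plan is correct and matches the paper's argument: apply $L$-smoothness at $\mathbf{w}_{t-1}$, separate the mini-batch noise via Assumption~3, isolate a gradient-mismatch term that is decomposed through Lemma~\ref{lemma1} into $\mathcal{S}_{t-1}$ and $\mathcal{D}_t$ branches, bound the local drift $\sum_i\|\mathbf{w}_{t-1}-\mathbf{w}^n_{t-1,i}\|^2$ recursively, and close the recursion with the two learning-rate conditions (the paper sets $A=2L^2\eta^2\tau(\tau-1)$ and uses $A\le 1/5$ for $t=1$ and $A\le S_t^2/(S_t^2+4S_{t-1}^2)$ for $t\ge 2$ exactly as you anticipate). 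The only cosmetic difference is that the paper packages the entire dataset/drift mismatch in a single quantity $C_1=\big\|\nabla F(\mathbf{w}_{t-1};\mathcal{S}_{t-1})-\sum_n\rho^n\bar{\mathbf{h}}^n_t\big\|^2$ (with $\bar{\mathbf{h}}^n_t$ the average of full-batch gradients on $\mathcal{S}^n_t$ along the local trajectory) and invokes Lemma~\ref{lemma1} inside $C_1$, rather than first peeling off a loss-value mismatch $F(\mathbf{w}_{t-1};\mathcal{S}_t)-F(\mathbf{w}_{t-1};\mathcal{S}_{t-1})$ as you propose.
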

\begin{proof}[proof]
Please refer to Appendix \ref{AppenB}.
\end{proof}

	The average-squared gradient norm is widely adopted to depict the performance of FL \cite{ LiuJSTSP2023}. Based on Lemma \ref{lemma1} and Lemma \ref{lemma2}, we introduce the following Theorem to show the upper bound of the average squared gradient norm, which illustrates the convergence performance for FedAVG-ISCC.
\begin{theorem} \label{Theo1}
	Under the condition of $0 \leq  2 L^2 \eta^2 \tau_{}\left( \tau_{} -1\right) \leq \min \{\frac{1}{5} ,\frac{S^{2}_{t}}{S^{2}_{t} + 4 S^{2}_{t-1}} \}$, the average squared gradient norm after $T$ communication rounds is bounded by
    \begin{equation}\label{gradnorm}
        \begin{split}
            \frac{1}{T}\sum_{t=1}^{T} \mathbb{E}\left\|\nabla F(\mathbf{w}_{t-1};\mathcal{S}_{t-1})\right\|^2 \leq \underbrace{\frac{4 \left( F\left(\mathbf{w}_{0};\mathcal{S}_{0}\right) - F^* \right) }{\left( 2 - \alpha^2  \right) T\eta\tau_{} }  }_{\text{\rm Effects of initialization}} \\
            + \underbrace{ \frac{  4 \sum_{t=1}^{T} \mathbb{E} \left\| \boldsymbol{\varepsilon}_t \right\|^2 }{\left( 2 - \alpha^2  \right) T\eta^{2} \tau^{2}  } }_{\text{\rm Effects of communication errors}}  + \underbrace{ \frac{4 L \eta \sigma^2 }{\left( 2 - \alpha^2  \right)} \sum_{n=1}^{N} \left( \rho^{n}\right)^2  }_{\text{\rm Effects of gradient variance}} \\
            +\underbrace{ \frac{1}{\left( 2 - \alpha^2  \right)  T} \left[ \left(1 + \sum_{t=2}^{T} \frac{ S^{2}_{t} }{ S^{2}_{t-1} } \right)  \beta^2 + \alpha^2 \sum_{t=2}^{T}  \frac{D^{2}_{t}}{S^{2}_{t-1}}  G_{t} \right]    }_{ \text{\rm Effects of sample sensing strategy and  NonIID  }} \\
            + \underbrace{ \frac{  L^{2}_{} \eta^{2}_{} \sigma^2 \left( \tau_{} -1\right) }{\left( 2 - \alpha^2  \right)  T} \left[  5 + \sum_{t=2}^{T} \left( 4 + \frac{S^{2}_{t}}{S^{2}_{t-1}} \right)  \right] }_{\text{\rm Effects of sample sensing strategy and local updates }}.
        \end{split}
    \end{equation}
\end{theorem}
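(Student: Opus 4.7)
The plan is a standard telescoping argument based on Lemma \ref{lemma2}. I would first rewrite the per-round descent inequality so that the squared-gradient term is isolated on the left, then sum over $t=1,\ldots,T$, and exploit $\mathbb{E} F(\mathbf{w}_T;\mathcal{S}_T) \geq F^*$ to collapse the loss differences.

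First I would rearrange Lemma \ref{lemma2} into
\begin{equation*}
\frac{(2-\alpha^2)\eta\tau}{4}\,\mathbb{E}\|\nabla F(\mathbf{w}_{t-1};\mathcal{S}_{t-1})\|^2 \leq \mathbb{E} F(\mathbf{w}_{t-1};\mathcal{S}_{t-1}) - \mathbb{E} F(\mathbf{w}_t;\mathcal{S}_t) + R_t,
\end{equation*}
where the residual $R_t$ bundles all the non-negative terms appearing on the right-hand side of \eqref{LossImprove1}: the gradient-variance term $L\tau\eta^2\sigma^2\sum_n(\rho^n)^2$, the communication-error term $\tfrac{1}{\eta\tau}\mathbb{E}\|\boldsymbol{\varepsilon}_t\|^2$, the bounded-dissimilarity contribution proportional to $\beta^2 S_t^2/S_{t-1}^2$, the sensing-driven piece $\alpha^2 D_t^2 G_t/S_{t-1}^2$, and the local-step term with coefficient $1+S_t^2/(4S_{t-1}^2)$ multiplying $L^2\eta^3\sigma^2\tau(\tau-1)$. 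For $t=1$ the expressions involving $S_{t-1}$ are absent, and Lemma \ref{lemma2} instead supplies the special constants $5/4$ and $1$ in front of the last two quantities, with $\|\nabla F(\mathbf{w}_0;\mathcal{D}_1)\|^2$ playing the role of $\|\nabla F(\mathbf{w}_0;\mathcal{S}_0)\|^2$ in the leading term.

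Next I would sum the rearranged inequalities from $t=1$ to $T$. The loss differences telescope to $F(\mathbf{w}_0;\mathcal{S}_0) - \mathbb{E} F(\mathbf{w}_T;\mathcal{S}_T)$, which is upper bounded by $F(\mathbf{w}_0;\mathcal{S}_0) - F^*$ by definition of $F^*$. Dividing through by $\tfrac{(2-\alpha^2)\eta\tau T}{4}$ produces the factor-of-four numerators appearing in every displayed term of \eqref{gradnorm}, and the variance and communication-error contributions appear immediately after simplification of $L\tau\eta^2\sigma^2$ and $\tfrac{1}{\eta\tau}\|\boldsymbol{\varepsilon}_t\|^2$ respectively. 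The bracketed coefficient $1+\sum_{t=2}^{T}S_t^2/S_{t-1}^2$ of $\beta^2$ is obtained by pairing the $t=1$ constant with the sum of general-$t$ constants, and the $G_t$ coefficient $\sum_{t=2}^{T}D_t^2/S_{t-1}^2$ drops out identically (note it has no $t=1$ contribution). The local-update grouping $5+\sum_{t=2}^{T}(4+S_t^2/S_{t-1}^2)$ is obtained by combining the $t=1$ constant $5/4$ with $\sum_{t=2}^{T}\bigl(1+S_t^2/(4S_{t-1}^2)\bigr)$ and factoring out $1/4$.

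The main obstacle is accounting rather than analysis: one must carefully match the $t=1$ boundary constants against the general-$t$ summands so that the coefficients in \eqref{gradnorm} emerge exactly, and one must verify that the step-size condition $2L^2\eta^2\tau(\tau-1)\leq\min\{1/5,\,S_t^2/(S_t^2+4S_{t-1}^2)\}$ holds uniformly so that Lemma \ref{lemma2} is applicable at every round --- this is guaranteed by the theorem's hypothesis. Taking expectations commutes with the rearrangement since every bound in Lemma \ref{lemma2} already holds in expectation over the mini-batch sampling and the additive channel noise, so no extra conditioning argument is required.
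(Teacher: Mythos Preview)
Your proposal is correct and is essentially the argument the paper has in mind: sum the per-round descent bounds of Lemma~\ref{lemma2}, telescope $F(\mathbf{w}_{t-1};\mathcal{S}_{t-1})-F(\mathbf{w}_t;\mathcal{S}_t)$, bound below by $F^*$, and divide through by $\tfrac{(2-\alpha^2)\eta\tau T}{4}$. Your bookkeeping of the $t=1$ boundary constants (the $5/4$ and the lone $\beta^2/4$) against the general-$t$ summands to produce the bracketed coefficients in \eqref{gradnorm} is exactly right, and your remark that $\nabla F(\mathbf{w}_0;\mathcal{D}_1)$ stands in for $\nabla F(\mathbf{w}_0;\mathcal{S}_0)$ addresses the only notational subtlety.
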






\subsection{Convergence Analysis for FedSGD-ISCC}

Based on Lemma \ref{lemma1}, we introduce Lemma \ref{lemma3} to show the upper bound of the improvement of the global loss function. 
\begin{lemma}\label{lemma3}
	When the learning rate $\eta$ satisfies $0 \leq \eta \leq \min \{ \frac{1}{L}, \frac{1}{2\sqrt{2}L} \frac{S_{t}}{S_{t-1}}\} $ in the $t$-th communication round, the improvement of the global loss function is bounded by  
		\begin{equation}\label{LossImprove2}
			\begin{aligned}
		 		& F\left(\mathbf{w}_{t}; \mathcal{S}_{t}\right) - F\left(\mathbf{w}_{t-1};\mathcal{S}_{t-1}\right)  \le\\
		 		&\begin{cases}
		 			\begin{matrix} 
		 				- \frac{\eta }{4} \left( 2 - \alpha^2  \right)  \left\| \nabla F(\mathbf{w}_{0};\mathcal{D}_{1}) \right\|^2  + \eta \left\|  \boldsymbol{\varepsilon}_t  \right\|^2 +  \frac{ \eta \beta^2 }{4}
		 			\end{matrix}, & \mathrm{if} ~~t=1,\\
		 			\begin{matrix}
		 				 - \frac{\eta }{4} \left( 2 - \alpha^2  \right)  \left\| \nabla F(\mathbf{w}_{t-1};\mathcal{S}_{t-1}) \right\|^2  + \eta \left\|  \boldsymbol{\varepsilon}_t  \right\|^2  \\
        + \frac{ \eta}{4} \frac{S^{2}_{t}}{S^{2}_{t-1}} \beta^2 + \frac{\eta \alpha^2  }{4 } \frac{D^{2}_{t}}{S^{2}_{t-1}}  G_{t}
		 			\end{matrix},
		 			&\mathrm{otherwise}.
		 		\end{cases}
		 	\end{aligned}
		\end{equation}
\end{lemma}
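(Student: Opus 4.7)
The strategy mirrors the proof of Lemma~\ref{lemma2} for FedAVG-ISCC but simplifies substantially, since FedSGD-ISCC performs only a single full-batch gradient evaluation per round: the $\tau$-fold local-update drift terms and the mini-batch variance $\sigma^{2}$ terms present in Lemma~\ref{lemma2} simply disappear. My first move is to apply the L-smoothness inequality from Assumption~1 to $F(\cdot;\mathcal{S}_{t})$ at the pair $(\mathbf{w}_{t-1},\mathbf{w}_{t})$, and to substitute the FedSGD-ISCC update
\begin{equation*}
\mathbf{w}_{t}-\mathbf{w}_{t-1}=-\eta\bigl(\nabla F(\mathbf{w}_{t-1};\mathcal{S}_{t})+\boldsymbol{\varepsilon}_{t}\bigr),
\end{equation*}
which follows from $\sum_{n}\rho^{n}\nabla F(\mathbf{w}_{t-1};\mathcal{S}^{n}_{t})=\nabla F(\mathbf{w}_{t-1};\mathcal{S}_{t})$ combined with \eqref{commerr}. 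This turns the one-step difference $F(\mathbf{w}_{t};\mathcal{S}_{t})-F(\mathbf{w}_{t-1};\mathcal{S}_{t})$ into a sum of $-\eta\|\nabla F(\mathbf{w}_{t-1};\mathcal{S}_{t})\|^{2}$, a cross term $\langle\nabla F(\mathbf{w}_{t-1};\mathcal{S}_{t}),\boldsymbol{\varepsilon}_{t}\rangle$, and a smoothness penalty $\tfrac{L\eta^{2}}{2}\|\nabla F(\mathbf{w}_{t-1};\mathcal{S}_{t})+\boldsymbol{\varepsilon}_{t}\|^{2}$. Young's inequality on the cross term together with $\|a+b\|^{2}\le 2\|a\|^{2}+2\|b\|^{2}$ on the penalty, combined with the first hypothesis $\eta\le 1/L$, leaves a strictly negative coefficient on $\|\nabla F(\mathbf{w}_{t-1};\mathcal{S}_{t})\|^{2}$ and a clean $\eta\|\boldsymbol{\varepsilon}_{t}\|^{2}$ term.

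Next, I would rewrite the bound so that the previous-round loss $F(\mathbf{w}_{t-1};\mathcal{S}_{t-1})$ and gradient $\nabla F(\mathbf{w}_{t-1};\mathcal{S}_{t-1})$ appear in place of their $\mathcal{S}_{t}$ counterparts. This relies on the affine relation $F(\mathbf{w};\mathcal{S}_{t})=\tfrac{S_{t-1}}{S_{t}}F(\mathbf{w};\mathcal{S}_{t-1})+\tfrac{D_{t}}{S_{t}}F(\mathbf{w};\mathcal{D}_{t})$ inherited from the definition \eqref{localLoss}, together with the gradient version furnished by Lemma~\ref{lemma1}. Applying $\|x\|^{2}\ge\tfrac{1}{2}\|x+y\|^{2}-\|y\|^{2}$ to the Lemma~\ref{lemma1} decomposition lets me lower-bound $\|\nabla F(\mathbf{w}_{t-1};\mathcal{S}_{t})\|^{2}$ by a $(S_{t-1}/S_{t})^{2}$-multiple of $\|\nabla F(\mathbf{w}_{t-1};\mathcal{S}_{t-1})\|^{2}$ minus a residual proportional to $(D_{t}/S_{t})^{2}\|\nabla F(\mathbf{w}_{t-1};\mathcal{D}_{t})\|^{2}$. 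This residual is then controlled by Assumption~2 ($\|\nabla F(\mathbf{w};\mathcal{D}_{t})\|^{2}\le G_{t}$) and Assumption~4 (bounded dissimilarity), which is what introduces $\alpha^{2}$ and $\beta^{2}$ and produces the two sensing-induced terms $\tfrac{\eta}{4}(S_{t}^{2}/S_{t-1}^{2})\beta^{2}$ and $\tfrac{\eta\alpha^{2}}{4}(D_{t}^{2}/S_{t-1}^{2})G_{t}$ appearing in \eqref{LossImprove2}.

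The $t=1$ branch is handled separately: since $\mathcal{S}_{0}=\emptyset$ and $\mathcal{S}_{1}=\mathcal{D}_{1}$, the decomposition above collapses, no $\mathcal{S}_{t-1}$ ratios appear, and the argument simplifies directly to the single-batch form listed as the first case of \eqref{LossImprove2}. The second learning-rate condition $\eta\le\tfrac{1}{2\sqrt{2}L}\,S_{t}/S_{t-1}$ is calibrated so that, after the $(S_{t-1}/S_{t})^{2}$ rescaling above, the smoothness penalty remains below the available gradient-progress budget, so that the final leading coefficient on $\|\nabla F(\mathbf{w}_{t-1};\mathcal{S}_{t-1})\|^{2}$ is exactly $-\tfrac{\eta}{4}(2-\alpha^{2})$ rather than a weaker constant.

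The hardest part is not any single inequality but the bookkeeping in the second paragraph above: the conversion from the $\mathcal{S}_{t}$-centred bound to the $\mathcal{S}_{t-1}$-centred bound introduces scale factors $S_{t}/S_{t-1}$ and $D_{t}/S_{t-1}$ that must be tracked through every use of Young's inequality so that the final $\alpha^{2}$, $\beta^{2}$ and $G_{t}$ coefficients exactly match the stated target; a less careful split still yields a valid bound but with looser prefactors. In particular, the residual $F(\mathbf{w}_{t-1};\mathcal{S}_{t})-F(\mathbf{w}_{t-1};\mathcal{S}_{t-1})=\tfrac{D_{t}}{S_{t}}\bigl[F(\mathbf{w}_{t-1};\mathcal{D}_{t})-F(\mathbf{w}_{t-1};\mathcal{S}_{t-1})\bigr]$ induced by the change of reference dataset is not sign-definite and must be absorbed against the gradient-descent progress and the $\alpha^{2}G_{t}$ budget, which is what makes the precise choice of Young constants tight.
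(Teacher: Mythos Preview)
Your route diverges from the paper's and creates a term that the stated assumptions cannot control. The paper does \emph{not} apply $L$-smoothness to $F(\cdot;\mathcal{S}_{t})$ and then convert datasets afterwards. Instead (see \eqref{Lsmooth3}--\eqref{Lsmooth3_1}) it writes the descent inequality with the \emph{old-dataset} gradient $\nabla F(\mathbf{w}_{t-1};\mathcal{S}_{t-1})$ already sitting in the inner product, so that after the polarization and Young manipulations the only mismatch left is the purely gradient-level quantity
\[
C_{2}\;=\;\eta\Bigl\|\nabla F(\mathbf{w}_{t-1};\mathcal{S}_{t-1})-\sum_{n}\rho^{n}\nabla F(\mathbf{w}_{t-1};\mathcal{S}^{n}_{t})\Bigr\|^{2}.
\]
This $C_{2}$ is then bounded in \eqref{C_2} by decomposing via Lemma~\ref{lemma1} and applying $L$-smoothness against \emph{hypothetical one-step updates} $\bar{\mathbf{w}}^{n}_{t}=\mathbf{w}_{t-1}-\eta\nabla F(\mathbf{w}_{t-1};\mathcal{S}^{n}_{t-1})$ and $\tilde{\mathbf{w}}^{n}_{t}=\mathbf{w}_{t-1}-\eta\nabla F(\mathbf{w}_{t-1};\mathcal{D}^{n}_{t})$, followed by Assumptions~2 and~4. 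That device manufactures an extra $L^{2}\eta^{2}$ factor in front of the drift and is exactly why the second hypothesis $\eta\le\tfrac{1}{2\sqrt{2}L}\,S_{t}/S_{t-1}$ suffices to collapse the coefficient on $\|\nabla F(\mathbf{w}_{t-1};\mathcal{S}_{t-1})\|^{2}$ to $-\tfrac{\eta}{4}(2-\alpha^{2})$.

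Your plan instead produces the loss-value residual $F(\mathbf{w}_{t-1};\mathcal{S}_{t})-F(\mathbf{w}_{t-1};\mathcal{S}_{t-1})=\tfrac{D_{t}}{S_{t}}\bigl[F(\mathbf{w}_{t-1};\mathcal{D}_{t})-F(\mathbf{w}_{t-1};\mathcal{S}_{t-1})\bigr]$, which is $O(1)$ in $\eta$. Assumptions~1--4 constrain only gradients, never differences of loss \emph{values} across datasets, so nothing in the $O(\eta)$ gradient-progress term or the $O(\eta)G_{t}$ budget can absorb it; no choice of Young constants turns an $O(1)$ term into an $O(\eta)$ one. Moreover, your lower bound on $\|\nabla F(\mathbf{w}_{t-1};\mathcal{S}_{t})\|^{2}$ delivers a leading coefficient $-\tfrac{\eta}{4}(S_{t-1}/S_{t})^{2}$ rather than $-\tfrac{\eta}{4}(2-\alpha^{2})$ and never invokes Assumption~4 at the place where $\alpha^{2}$ must enter. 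To reach the stated bound you should follow the paper: put $\nabla F(\mathbf{w}_{t-1};\mathcal{S}_{t-1})$ in the smoothness inner product from the outset, isolate $C_{2}$, and bound it through the hypothetical-step Lipschitz trick.
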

\begin{proof}[proof]
	Please refer to Appendix \ref{AppenC}.
\end{proof}
	
	
Based on Lemma \ref{lemma1} and Lemma \ref{lemma3}, we introduce the following Theorem to show the upper bound of the average-squared gradient norm  for FedSGD-ISCC. 
\begin{theorem} \label{Theo2}
	Under the condition $0 \leq \eta \leq \min \{ \frac{1}{L}, \frac{1}{2\sqrt{2}L} \frac{S_{t}}{S_{t-1}}\}, \forall t $, the average-squared gradient norm after $T$ communication rounds is bounded by
		\begin{equation}\label{gradnorm2}
			\begin{split}
				&\frac{1}{T}\sum_{t=1}^{T} \mathbb{E}\left\| \nabla F(\mathbf{w}_{t-1};\mathcal{S}_{t-1})\right\|^2 \leq  \\
                    &\underbrace{ \frac{4 \left( F\left(\mathbf{w}_{0};\mathcal{S}_{0}\right) - F^* \right) }{\left( 2 - \alpha^2  \right) T\eta } }_{\text{\rm Effects of  Initialization}}  + \underbrace{\frac{4}{\left( 2 - \alpha^2  \right) T} \left[ \sum_{t=1}^{T} \mathbb{E}  \left(\left\|\boldsymbol{\varepsilon}_t\right\|^2\right)  \right]}_{\text{\rm Effects of communication errors}} \\ 
				&+ \underbrace{\frac{ 1 }{\left( 2 - \alpha^2  \right) T}\left[ \left( 1 + \sum_{t=2}^{T} \frac{S^{2}_{t}}{S^{2}_{t-1}} \right) \beta^2 + \alpha^2 \sum_{t=2}^{T} \frac{D^{2}_{t}}{S^{2}_{t-1}}  G_t  \right]  }_{\text{\rm Effects of sample sensing strategy  and  NonIID } }.
			\end{split}
	\end{equation}
\end{theorem}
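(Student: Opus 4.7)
The plan is to derive Theorem \ref{Theo2} by telescoping the per-round descent inequality from Lemma \ref{lemma3} over the $T$ communication rounds, invoking $F^\ast$ as a lower bound, and rearranging to isolate the time-averaged squared gradient norm. Because FedSGD performs only a single full-batch gradient step per round, Lemma \ref{lemma3} carries no $\tau$-dependent terms, so this derivation should be substantially cleaner than its FedAVG counterpart (Theorem \ref{Theo1}).

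First, I would apply Lemma \ref{lemma3} at each round $t$ and sum both sides from $t=1$ to $T$. The left-hand side telescopes to $F(\mathbf{w}_T;\mathcal{S}_T) - F(\mathbf{w}_0;\mathcal{S}_0)$. On the right-hand side, the common coefficient $-\frac{\eta}{4}(2-\alpha^{2})$ collects all $\|\nabla F(\mathbf{w}_{t-1};\mathcal{S}_{t-1})\|^{2}$ terms, with the $t=1$ case contributing $\|\nabla F(\mathbf{w}_0;\mathcal{D}_1)\|^{2}$ under the natural convention $\mathcal{S}_0 \equiv \mathcal{D}_1$ (so that $S_0 = D_1$). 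The communication-error terms combine into $\eta \sum_{t=1}^{T} \|\boldsymbol{\varepsilon}_t\|^{2}$; the Non-IID terms give $\frac{\eta\beta^{2}}{4}\bigl(1 + \sum_{t=2}^{T} S_t^{2}/S_{t-1}^{2}\bigr)$, where the leading $1$ comes from the $t=1$ case; and the sensing-mismatch terms collect into $\frac{\eta\alpha^{2}}{4}\sum_{t=2}^{T} (D_t^{2}/S_{t-1}^{2}) G_t$. Taking expectations throughout handles the stochasticity of the aggregation error $\boldsymbol{\varepsilon}_t$.

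Next, I would use $F(\mathbf{w}_T;\mathcal{S}_T) \geq F^\ast$ to lower-bound the telescoped left-hand side by $F^\ast - F(\mathbf{w}_0;\mathcal{S}_0)$, move the negative gradient-norm sum to the left, and divide both sides by $T\eta(2-\alpha^{2})/4$. This converts the left side into $\frac{1}{T}\sum_{t=1}^{T}\mathbb{E}\|\nabla F(\mathbf{w}_{t-1};\mathcal{S}_{t-1})\|^{2}$ and the right side into the four labeled contributions in \eqref{gradnorm2}: an $O(1/(T\eta))$ initialization term proportional to $F(\mathbf{w}_0;\mathcal{S}_0) - F^\ast$, a communication-error average, a $\beta^{2}$-weighted Non-IID term, and an $\alpha^{2}$-weighted sensing term driven by the ratio $D_t^{2}/S_{t-1}^{2}$.

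The main obstacle is bookkeeping rather than new analysis: matching every constant so that the factor $4/(2-\alpha^{2})$ in each labeled term of \eqref{gradnorm2} arises precisely from canceling the coefficient $\eta(2-\alpha^{2})/4$ in Lemma \ref{lemma3}; verifying that the learning-rate condition $0 \leq \eta \leq \min\{1/L,\, S_t/(2\sqrt{2}L S_{t-1})\}$ is simultaneously admissible for every $t \in \{1,\dots,T\}$ so that Lemma \ref{lemma3} is valid at every round summed; and handling the $t=1$ base case so that no spurious $S_0$ or $1/S_0$ denominator appears. Because Lemma \ref{lemma3} already absorbs all per-round analytic work (smoothness, the gradient decomposition from Lemma \ref{lemma1}, gradient variance, and the bounded-dissimilarity assumption), no additional inequalities need to be introduced at this stage.
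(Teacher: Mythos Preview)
Your proposal is correct and follows precisely the approach the paper intends: the paper does not write out an explicit proof of Theorem~\ref{Theo2} but states it as a direct consequence of Lemma~\ref{lemma1} and Lemma~\ref{lemma3}, and the only way to pass from the per-round descent inequality \eqref{LossImprove2} to the averaged bound \eqref{gradnorm2} is exactly the telescoping-plus-rearrangement argument you describe. Your handling of the $t=1$ base case (identifying $\nabla F(\mathbf{w}_0;\mathcal{S}_0)$ with $\nabla F(\mathbf{w}_0;\mathcal{D}_1)$), the bookkeeping of the $\beta^2$ and $G_t$ terms, and the division by $\frac{\eta}{4}(2-\alpha^2)T$ all match the constants in \eqref{gradnorm2} exactly.
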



\subsection{Performance analysis and discussion}

In this subsection, we analyze the performance of both FedAVG-ISCC and FedSGD-ISCC from a theoretical perspective. Specifically, we examine their performance considering differnernt factors including IID setting, Non-IID setting, communication errors and sample collection strategy, followed by a discussion of their computational complexity.

\subsubsection{Performance analysis to IID setting}
To analyze performance under IID data, we assume that both FedAVG-ISCC and FedSGD-ISCC employ the same sample collection strategy and experience identical communication errors during each communication round. For IID data, we have $\alpha^2 = 1$ and $\beta^2 = 0$. When the local updates satisfy $\tau \geq 1$, it is evident from \eqref{gradnorm} that the bound of FedAVG-ISCC first decreases and then increases with the increase in $\tau$. In contrast, the bound for FedSGD-ISCC in \eqref{gradnorm2} remains constant since it does not involve local updates ($\tau$). This indicates that FedAVG-ISCC can outperform FedSGD-ISCC due to the advantage of multiple local updates ($\tau$). Thus, we conclude that FedAVG-ISCC is more efficient and effective under IID data, when $\tau$ is appropriately tuned.

\subsubsection{Performance analysis to Non-IID setting}
For the analysis under Non-IID setting, we similarly assume that both FedAVG-ISCC and FedSGD-ISCC employ the same sample collection strategy and experience identical communication errors during each communication round, with a constant $\tau$.
Under Non-IID data, we have $\alpha^2 \geq 1$ and $\beta^2 \geq 0$. It is evident from \eqref{gradnorm} and \eqref{gradnorm2} that Non-IID data significantly deteriorates the performance of both FedAVG-ISCC and FedSGD-ISCC as $\alpha^2$ ($\alpha^2 \leq 2$) and $\beta^2$ increase. However, the last term in \eqref{gradnorm} shows that the local updates ($\tau$) in FedAVG-ISCC amplify the impact of Non-IID data, further exacerbating performance degradation. In contrast, FedSGD-ISCC, as seen in \eqref{gradnorm2}, avoids this issue by not involving multiple local updates. Consequently, while FedAVG-ISCC is more efficient under IID data, it is more vulnerable to Non-IID settings. Meanwhile, FedSGD-ISCC exhibits greater robustness under Non-IID data, as it does not suffer from the performance degradation caused by local updates.

\subsubsection{Performance analysis of communication errors}

We examine the performance of both FedAVG-ISCC and FedSGD-ISCC in the presence of communication errors, assuming identical sample collection strategies and data distribution.
From the second term in \eqref{gradnorm}, it is observed that communication errors in FedAVG-ISCC are divided by $T\eta^2$, whereas in FedSGD-ISCC, communication errors are averaged over $T$ rounds, as shown in \eqref{gradnorm2}. In practice, since the learning rate typically satisfies $0 \leq \eta \leq 0.1$, this significantly amplifies the communication errors in \eqref{gradnorm}, severely deteriorating the performance of FedAVG-ISCC. Conversely, FedSGD-ISCC does not encounter such issues. Therefore, we conclude that FedSGD-ISCC is more robust to communication errors than FedAVG-ISCC.

\subsubsection{Performance analysis of sample collection strategy }
It is clear from \eqref{gradnorm} and \eqref{gradnorm2} that the sample collection strategy plays a critical role in the performance of both FedAVG-ISCC and FedSGD-ISCC. Collecting more samples in the early communication rounds leads to better performance. However, this also increases computational overhead in terms of training latency and energy consumption. Therefore, a well-designed sample collection strategy is essential for achieving cost-efficient federated learning.

\subsubsection{Computational Complexity}
We next analyze the computational complexities of FedAVG-ISCC and FedSGD-ISCC, which are formally stated in the following corollaries. 

\begin{corollary}
	Suppose the learning rate satisfies $\eta = \sqrt{\frac{N}{\tau T}}$ and let $\rho^{n} = \frac{1}{N}$. Then, for sufficiently large $T$, the computational complexity of FedAVG-ISCC is given by
	\begin{equation}\label{CompAVG}
		\begin{split}
			\frac{1}{T}\sum_{t=1}^{T} \mathbb{E}\left\|\nabla F(\mathbf{w}_{t-1};\mathcal{S}_{t-1})\right\|^2 \leq \mathcal{O}\left( + \frac{  2 M_1 }{\left( 2 - \alpha^2  \right) N \tau^{}  }  + \frac{M_2}{\left( 2 - \alpha^2  \right) }  \right. \\
			\left. \frac{4 \left( F\left(\mathbf{w}_{0};\mathcal{S}_{0}\right) - F^* \right) }{\left( 2 - \alpha^2  \right) \sqrt{N\tau T}}  + \frac{L \sigma^2 }{\left( 2 - \alpha^2  \right) \sqrt{N\tau T}}  + \frac{  L^{2}_{} N \sigma^2 \left( \tau_{} -1 \right) M_3 }{\left( 2 - \alpha^2  \right) \tau T  }   \right)\\
                = \mathcal{O} \left( \frac{1 }{\sqrt{\tau N T}}  \right) + \mathcal{O} \left( \frac{ M_1 }{ N \tau }  \right) + \mathcal{O} \left( M_2 \right) + \mathcal{O} \left(  \frac{\sigma^2 N  \left( \tau_{} -1\right) M_3 }{\tau_{} T } \right)
		\end{split}
	\end{equation}
	where $\mathcal{O}$ swallows all constants (including $L$). $M_1 = \sum_{t=1}^{T} \mathbb{E} \left\| \boldsymbol{\varepsilon}_t \right\|^2$,  $M_3 = \frac{1 }{T} \sum_{t=1}^{T} \left[  5 + \sum_{t=2}^{T} \left( 4 + \frac{S^{2}_{t}}{S^{2}_{t-1}} \right)  \right]$, and $M_2 = \frac{1 }{T} \sum_{t=1}^{T} \left[ \left(1 + \sum_{t=2}^{T} \frac{ S^{2}_{t} }{ S^{2}_{t-1} } \right)  \beta^2 + \alpha^2 \sum_{t=2}^{T}  \frac{D^{2}_{t}}{S^{2}_{t-1}}  G_{t} \right]$. %
\end{corollary}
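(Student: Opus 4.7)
The plan is to substitute the specified stepsize $\eta = \sqrt{N/(\tau T)}$ and uniform aggregation weights $\rho^{n} = 1/N$ directly into the convergence bound of Theorem \ref{Theo1}, simplify each of the five terms on its right-hand side, and finally collect the resulting expressions under the $\mathcal{O}$ notation.

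Before invoking Theorem \ref{Theo1}, I would first verify that its stepsize hypothesis $0 \leq 2L^2\eta^2\tau(\tau-1) \leq \min\{1/5,\; S_t^2/(S_t^2 + 4 S_{t-1}^2)\}$ is satisfied under the chosen $\eta$. Substituting gives $2L^2\eta^2\tau(\tau-1) = 2L^2 N(\tau-1)/T = \mathcal{O}(1/T)$, which eventually falls below both $1/5$ and the sample-sensing bound. This is precisely where the ``for sufficiently large $T$'' qualifier in the corollary enters.

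The next step is a term-by-term substitution into \eqref{gradnorm}. The initialization term has denominator $T\eta\tau = \sqrt{N\tau T}$, producing an $\mathcal{O}(1/\sqrt{N\tau T})$ contribution. The communication-error term has denominator $T\eta^2\tau^2 = N\tau$, yielding $\mathcal{O}(M_1/(N\tau))$. For the gradient-variance term I would use $\sum_{n}(\rho^n)^2 = 1/N$ together with $L\eta/N = L/\sqrt{N\tau T}$, which gives $\mathcal{O}(\sigma^2/\sqrt{N\tau T})$. The Non-IID/sensing term is independent of $\eta$ and collapses to $\mathcal{O}(M_2)$ after dividing its bracket by $T$ as in the definition of $M_2$. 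Finally, the local-update term carries a factor $\eta^2/T = N/(\tau T^2)$; multiplying by $(\tau-1)$ and by the bracket $[5 + \sum_{t\geq 2}(4 + S_t^2/S_{t-1}^2)] = T M_3$ produces $\mathcal{O}(\sigma^2 N (\tau-1) M_3/(\tau T))$.

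Collecting all five contributions, absorbing $L$ and the prefactor $4/(2-\alpha^2)$ into the $\mathcal{O}$, and merging the initialization and variance terms (which share the rate $1/\sqrt{N\tau T}$) yields the four-term bound stated in \eqref{CompAVG}. The main difficulty is not conceptual but combinatorial: one must carefully match the bracketed quantities in the definitions of $M_1, M_2, M_3$ to the corresponding pieces of Theorem \ref{Theo1}, and verify that the stepsize constraint is only non-trivial for moderate $T$ so that the large-$T$ regime is consistent with the hypothesis.
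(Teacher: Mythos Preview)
Your proposal is correct and follows exactly the approach implicit in the paper: the corollary is stated without a separate proof because it follows by direct substitution of $\eta=\sqrt{N/(\tau T)}$ and $\rho^n=1/N$ into the bound \eqref{gradnorm} of Theorem~\ref{Theo1}, which is precisely what you outline term by term. Your additional verification that the stepsize constraint of Theorem~\ref{Theo1} is eventually satisfied (justifying the ``sufficiently large $T$'' hypothesis) is a useful detail the paper leaves tacit.
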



\begin{corollary}
	Suppose the learning rate satisfies $\eta = \sqrt{\frac{N}{ T}}$. Then, for sufficiently large $T$, the computational complexity of FedSGD-ISCC is given by
	\begin{equation}\label{CompSGD}
		\begin{split}
			&\frac{1}{T}\sum_{t=1}^{T} \mathbb{E}\left\|\nabla F(\mathbf{w}_{t-1};\mathcal{S}_{t-1})\right\|^2 \leq \\
                &\mathcal{O} \left( \frac{4 \left( F\left(\mathbf{w}_{0};\mathcal{S}_{0}\right) - F^* \right) }{\left( 2 - \alpha^2  \right) \sqrt{NT}} +  \frac{4 M_1}{\left( 2 - \alpha^2  \right) }  + \frac{M_2}{\left( 2 - \alpha^2  \right) } \right) \\
			  & = \mathcal{O} \left( \frac{1}{\sqrt{NT}} \right) + \mathcal{O} \left( M_1 \right) + \mathcal{O} \left( M_2 \right) .
		\end{split}
	\end{equation}
\end{corollary}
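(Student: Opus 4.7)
The plan is to specialize Theorem~\ref{Theo2} to the prescribed learning rate and then collect the resulting terms in asymptotic notation; no new probabilistic or analytical machinery is required beyond what is already invoked in the proof of that theorem.

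First, I would verify that $\eta=\sqrt{N/T}$ meets the hypothesis $0\le \eta \le \min\{1/L,\;\tfrac{1}{2\sqrt{2}\,L}\tfrac{S_t}{S_{t-1}}\}$ of Theorem~\ref{Theo2}. Since the cumulative sample sizes satisfy $S_t\ge S_{t-1}$ by construction (so $S_t/S_{t-1}\ge 1$) and $\sqrt{N/T}\to 0$ as $T\to\infty$ for fixed $N$ and $L$, the condition holds uniformly in $t$ for every $T\ge 8NL^2$. This is precisely the \emph{sufficiently large $T$} clause in the statement, and no tighter relation between $T$ and the sensing profile $\{S_t\}$ is needed.

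Next, I would substitute $\eta=\sqrt{N/T}$ term by term into bound~\eqref{gradnorm2}. Only the initialization term depends on $\eta$, and it becomes
\[
\frac{4\bigl(F(\mathbf{w}_0;\mathcal{S}_0)-F^*\bigr)}{(2-\alpha^2)\,T\eta}
=\frac{4\bigl(F(\mathbf{w}_0;\mathcal{S}_0)-F^*\bigr)}{(2-\alpha^2)\sqrt{NT}},
\]
which yields the $\mathcal{O}(1/\sqrt{NT})$ rate. The communication-error term $\tfrac{4}{(2-\alpha^2)T}\sum_{t=1}^T \mathbb{E}\|\boldsymbol{\varepsilon}_t\|^2$ and the sample-sensing/Non-IID term are both $\eta$-independent and therefore rewrite directly as $\tfrac{4M_1}{2-\alpha^2}$ and $\tfrac{M_2}{2-\alpha^2}$, respectively, under the shorthand $M_1,M_2$ introduced in Corollary~1 (with the understanding that the time-averaging normalization implicit in the $\tfrac{1}{T}$ prefactor of~\eqref{gradnorm2} is folded into the definitions).

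Finally, I would let $\mathcal{O}(\cdot)$ absorb all constants (notably $L$, $(2-\alpha^2)^{-1}$, and $F(\mathbf{w}_0;\mathcal{S}_0)-F^*$) to obtain the compact form $\mathcal{O}(1/\sqrt{NT})+\mathcal{O}(M_1)+\mathcal{O}(M_2)$ displayed in~\eqref{CompSGD}. The only delicate point, and hence the ``main obstacle'' in a proof that is otherwise a direct algebraic specialization, is bookkeeping: the definitions of $M_1$ and $M_2$ inherited from Corollary~1 must be re-read so that they correspond to the FedSGD-ISCC bound, which contains no local-update factor $\tau$ and no gradient-variance contribution $\sigma^2$. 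Unlike the FedAVG-ISCC case, no balancing of competing $\eta^2$ and $1/\eta$ terms is required, because~\eqref{gradnorm2} is monotone in $\eta$ on the admissible interval; the particular scaling $\eta=\sqrt{N/T}$ is chosen solely to match the initialization term against the target $\sqrt{NT}$ denominator.
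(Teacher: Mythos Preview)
Your proposal is correct and follows essentially the same approach the paper takes: the corollary is obtained by directly substituting $\eta=\sqrt{N/T}$ into the bound~\eqref{gradnorm2} of Theorem~\ref{Theo2} and absorbing constants into the $\mathcal{O}(\cdot)$ notation (the paper does not supply a separate proof). Your explicit check of the step-size condition and your observation about the $1/T$ normalization hidden in the $M_1$ term are both apt; the paper leaves these details implicit.
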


It is observed from \eqref{CompAVG} and \eqref{CompSGD} that the computational complexities of FedAVG-ISCC and FedSGD-ISCC largely depends on the communication errors ($M_1$) and sample sensing strategy ($M_2$ and $M_3$). If the communication errors are properly eliminated and the sample sensing strategy is bounded, the computational complexities of FedAVG-ISCC and FedSGD-ISCC are respectively represented as  $\mathcal{O} \left( \frac{1 }{\sqrt{\tau N T}}  \right) + \mathcal{O} \left(  \frac{\sigma^2 N  \left( \tau_{} -1\right) }{\tau_{} T } \right)$ and $\mathcal{O} \left( \frac{1}{\sqrt{NT}} \right)$, which are consistent with previous results \cite{WangJMLR2021, YuICML2019, WangNeuIPS2020}. Therefore, a well-designed communication optimization algorithm and sample sensing strategy are necessary to improve their computational complexities.

\section{Simulation results}
This section presents simulation results to demonstrate the efficiency of and validate the performance analysis of the ISCC design for FL.

\begin{figure}[]
 \begin{center}
  \subfigure[MNIST dataset]{
    \includegraphics[width=0.7\linewidth]{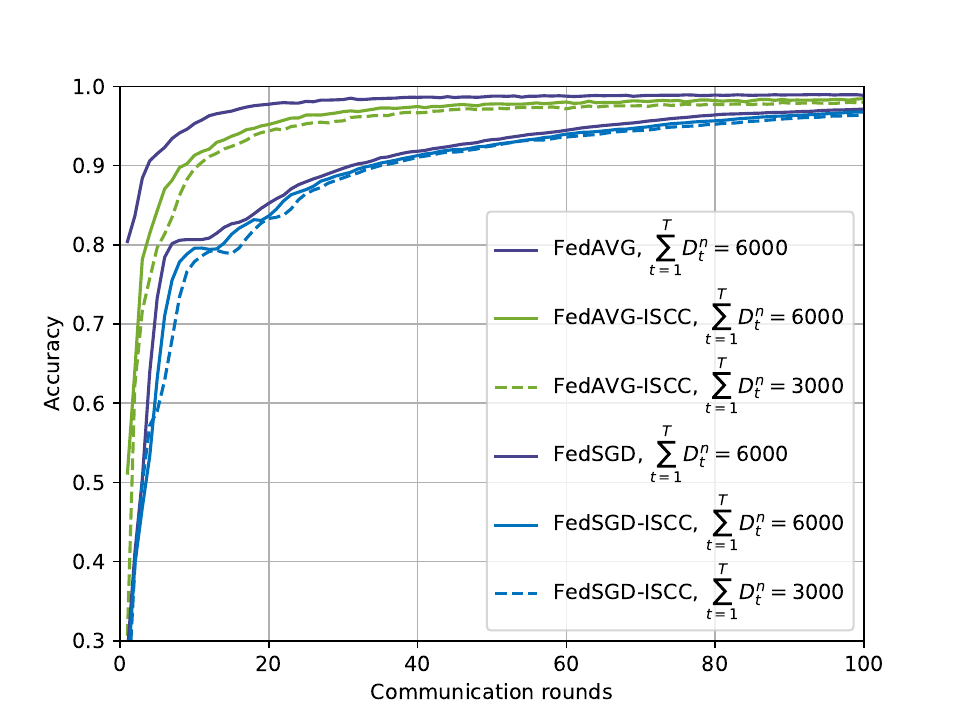}
  }%
  
  \subfigure[FMNIST dataset]{
    \includegraphics[width=0.7\linewidth]{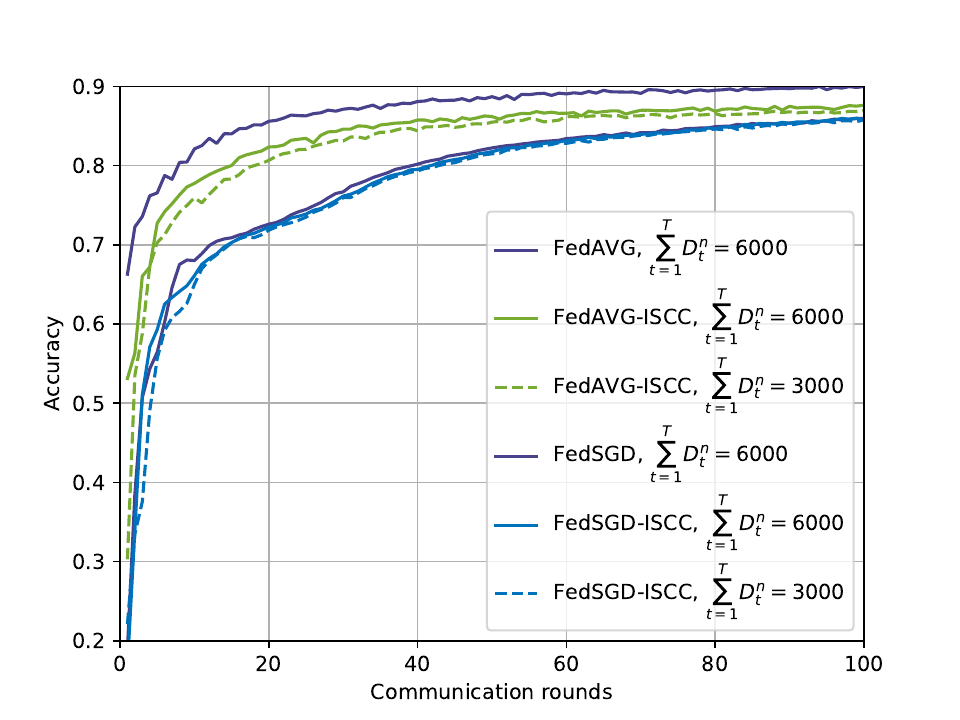}
  }%

  \caption{Performance comparison between FedSGD-ISCC and FedAVG-ISCC under IID settings.}
  \label{IID_simulation}
 \end{center}
\end{figure}

\subsection{Experiment setup}
In this work, we set the number of devices to $N=10$. The wireless channels between devices and the server follow independent and identically distributed (IID) Rayleigh fading, modeled as IID circularly symmetric complex Gaussian random variables with zero mean and unit variance. We assume the noise variance $\sigma^2_z = 1$ W, and the maximum transmit power budget of each device $P^{n}_{\max} = 10$ W unless otherwise specified. 
We consider to train a convolutional neural network (CNN) model for an image classification task. The performance analysis derived in this work is evaluated and validated over the MNIST and Fashion MNIST datasets, each containing 60,000 samples for training and 10,000 samples for testing. The learning rate is set to 0.001. we assume that the classification task requires collecting a total of ${ \textstyle \sum_{t=1}^{T}} D^n_t $ = 6000 samples for each device during the training procedure.


\subsection{Validation of theoretical analysis}

In Fig. \ref{IID_simulation}, we compare the accuracy of FedSGD-ISCC and FedAVG-ISCC across different datasets under IID settings, including MNIST and FMNIST. The results clearly show that FedAVG-ISCC significantly outperforms FedSGD-ISCC in IID settings, particularly in terms of faster convergence. This finding supports our theoretical analysis, which suggests that FedAVG-ISCC benefits from multiple local updates, leading to improved convergence performance. Additionally, we observe that both FedAVG-ISCC and FedSGD-ISCC exhibit worse convergence compared to the classic FedAVG and FedSGD, respectively. This is consistent with our analysis, which indicates that communication errors and the sample sensing strategy negatively impact the performance of FL.

\begin{figure}[]
 \begin{center}
  \subfigure[MNIST dataset]{
    \includegraphics[width=0.7\linewidth]{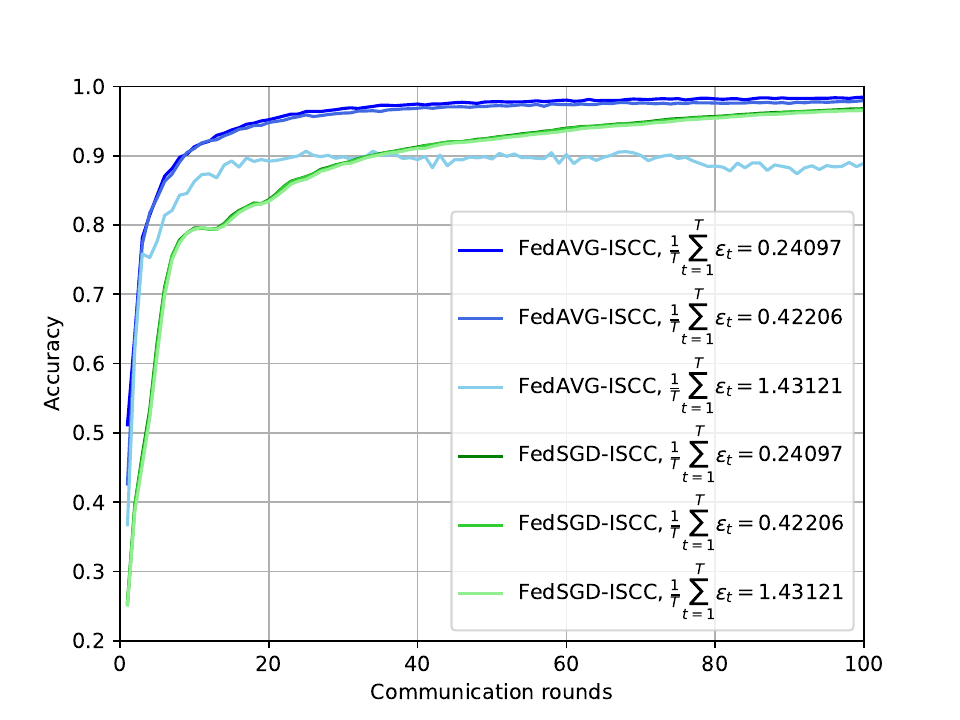}
  }%
  
  \subfigure[FMNIST dataset]{
    \includegraphics[width=0.7\linewidth]{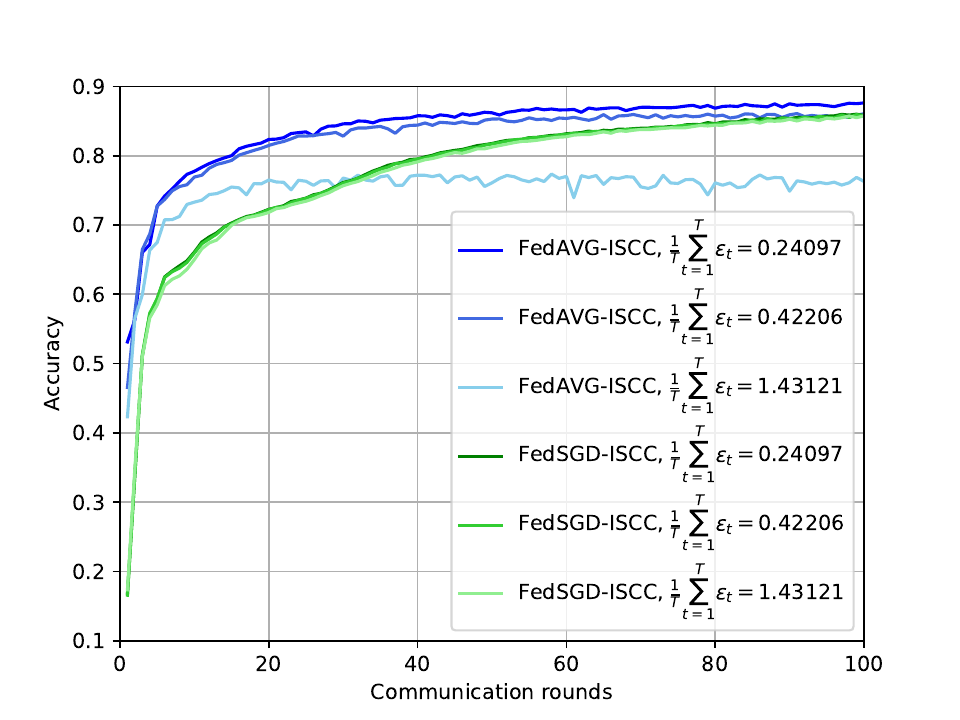}
  }%

  \caption{Performance comparison between FedAVG-ISCC and FedSGD-ISCC under different communication errors.}
  \label{CommError}
 \end{center}
\end{figure}

In Fig. \ref{CommError}, we present the accuracy of FedAVG-ISCC and FedSGD-ISCC under average different average communication errors. It is observed that the FedAVG-ISCC exhibits better performance than the FedSGD-ISCC when the average communication errors are low. However, the FedAVG-ISCC experiences severe degradation, and even worse than FedSGD-ISCC, while the FedSGD-ISCC basically remain the same performance when the average communication errors increase. This verifies our analysis that the FedSGD-ISCC is more robust to the communication errors.

\begin{figure}[]
 \begin{center}
  \subfigure[MNIST dataset]{
    \includegraphics[width=0.7\linewidth]{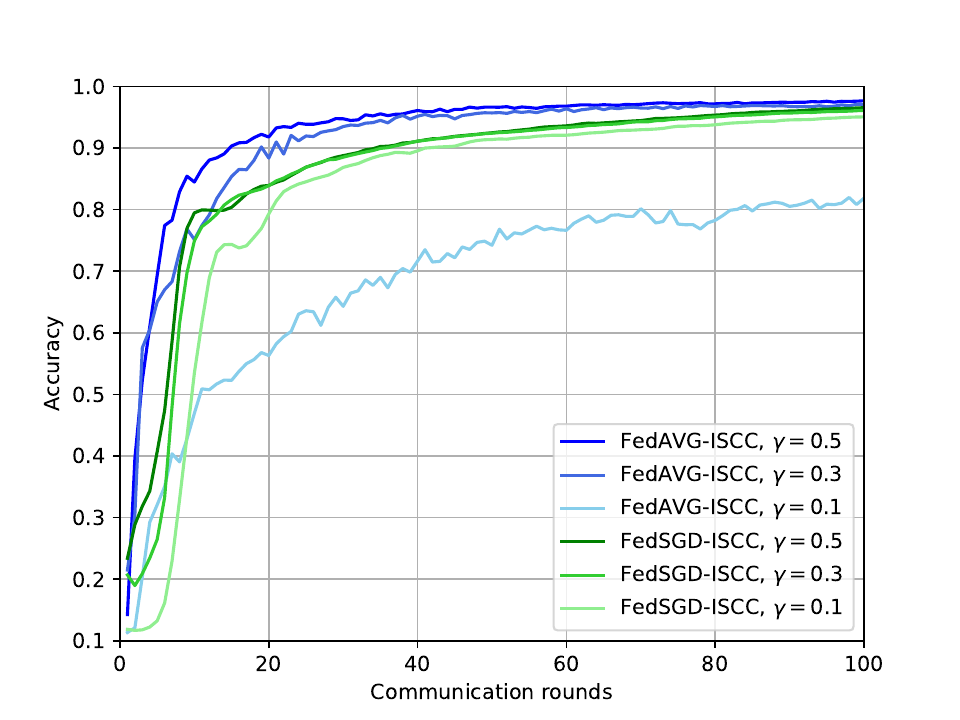}
  }%
  
  \subfigure[FMNIST dataset]{
    \includegraphics[width=0.7\linewidth]{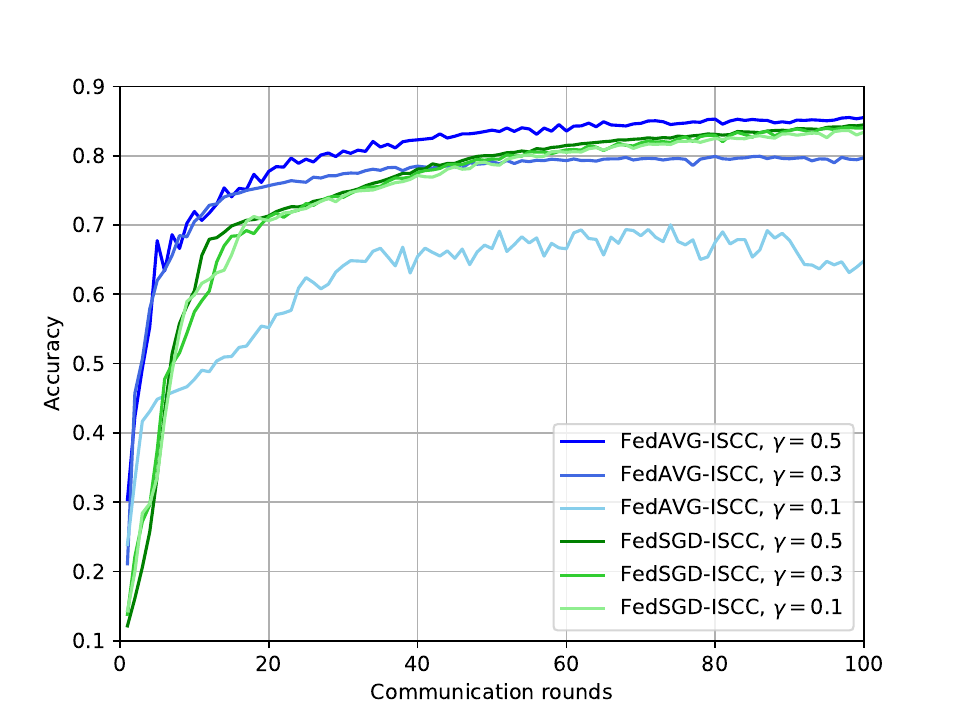}
  }%

  \caption{Performance comparison between FedAVG-ISCC and FedSGD-ISCC under different Non-IID settings.}
  \label{NonIID}
 \end{center}
\end{figure}

In Fig. \ref{NonIID}, we illustrate the accuracy of FedAVG-ISCC and FedSGD-ISCC under different Non-IID settings. The parameter $\gamma$ indicates the degree of Non-IID setting, with lower values of $\gamma$ corresponding to greater Non-IID setting. As shown in Fig. \ref{NonIID}, FedAVG-ISCC outperforms FedSGD-ISCC, similar to the IID settings, when $\gamma$ is large enough. However, as $\gamma$ decreases, the convergence of FedSGD-ISCC is only slightly affected, while the convergence of FedAVG-ISCC deteriorates significantly, ultimately performing worse than FedSGD-ISCC. This is consistent with our analysis results, indicating that FedSGD-ISCC is is more robust to Non-IID data.

To illustrate the effectiveness of our proposed FL-ISCC framework on enhancing training efficiency, we compare the accuracy versus both latency and energy consumption between FL-ISCC and OTA-FL using the MNIST dataset, as shown in \ref{Effectiveness}. The simulation results clearly indicate that both FedAVG-ISCC and FedSGD-ISCC outperform OTA-FedAVG and OTA-FedSGD, respectively, in terms of both latency and energy consumption. Therefore, the proposed FL-ISCC framework holds significant potential for enhancing training efficiency in the upcoming 6G era of intelligent applications.

\begin{figure}[]
 \begin{center}
  \subfigure[Latency]{
    \includegraphics[width=0.7\linewidth]{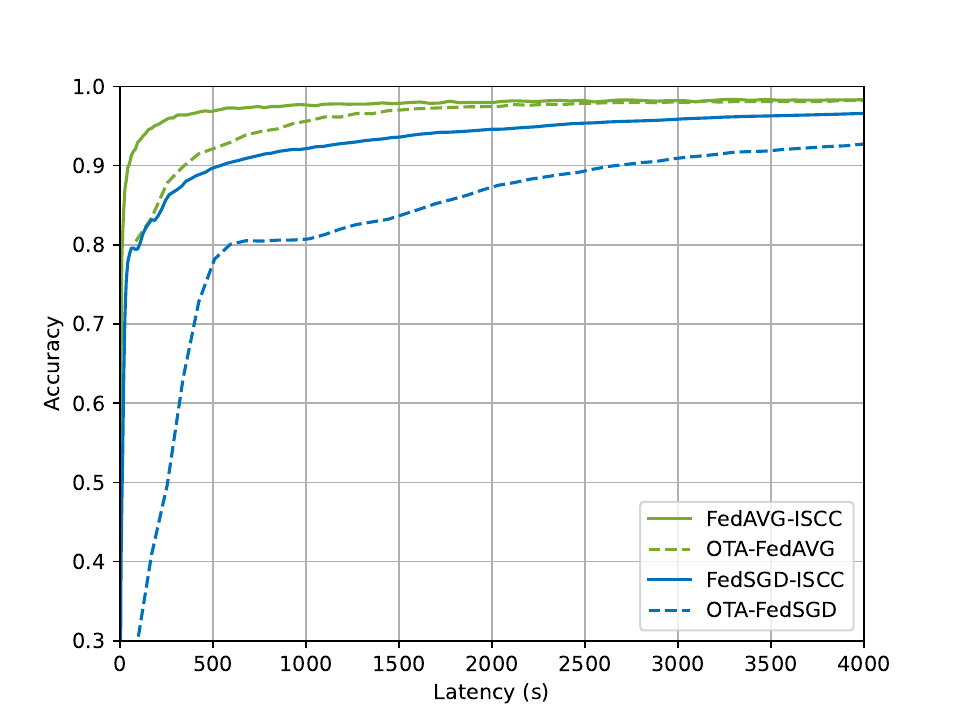}
  }%
  
  \subfigure[Energy consumption]{
    \includegraphics[width=0.7\linewidth]{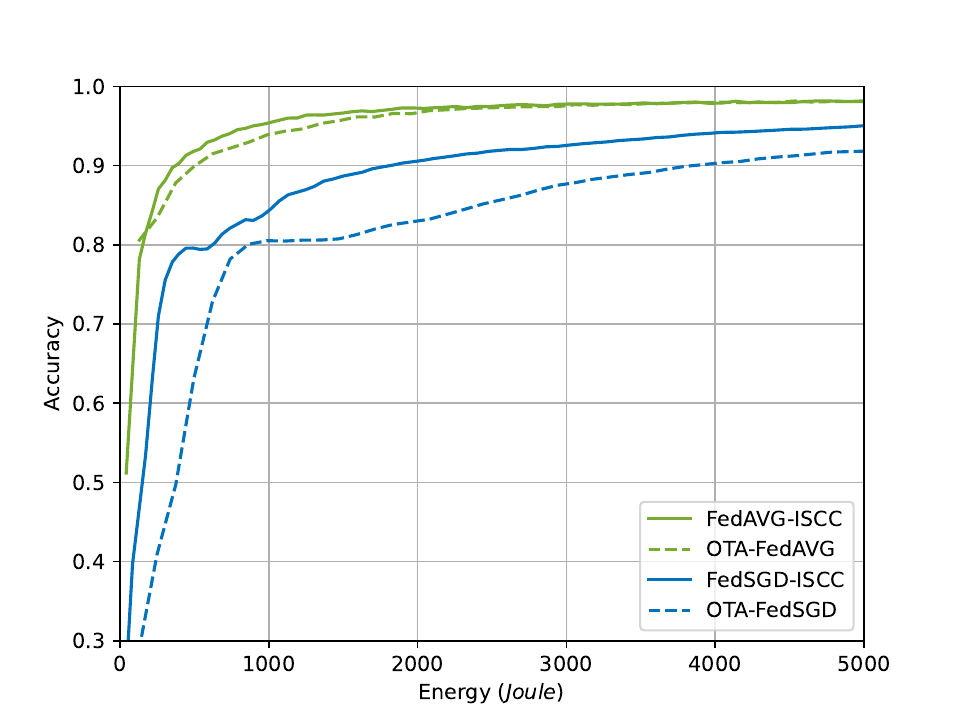}
  }%

  \caption{The effectiveness of our proposed FL-ISCC framework.}
  \label{Effectiveness}
 \end{center}
\end{figure}

\section{Conclusion}
In this work, we investigated an FL-ISCC framework that integrates sample collection, local training, and model exchange and aggregation. Within this framework, we implemented both FedAVG-ISCC and FedSGD-ISCC algorithms.Our empirical results highlighted the significant potential of the FL-ISCC framework in enhancing training efficiency, particularly in reducing latency and energy consumption in FL. 
We conducted a theoretical analysis and comparison between FedAVG-ISCC and FedSGD-ISCC, revealing that both sample collection and communication errors negatively impacted their performance. While FedAVG-ISCC significantly outperformed FedSGD-ISCC in terms of convergence rate under IID data conditions, FedSGD-ISCC exhibited greater robustness under Non-IID data, where FedAVG-ISCC experienced severe performance degradation as data heterogeneity increased. Additionally, FedSGD-ISCC proved more resilient to communication errors, whereas FedAVG-ISCC suffered notable performance degradation with increasing communication errors. Extensive simulations validated the effectiveness of our proposed framework and corroborated our theoretical findings.


%

\appendices
 \section{Proof of Lemma 1} \label{AppenA}
 According to the definition of local loss function in \eqref{localLoss}, we have
 \begin{small}
     
 \begin{equation}\label{lossLocal}
	 	\begin{split}
		 	F(\mathbf{w}^{n}_{t-1}; \mathcal{S}^{n}_{t})  = \frac{1}{S^{n}_{t}}\left[\sum_{\left(\mathbf{x}_{j}, y_{j}\right) \in \mathcal{S}^{n}_{t-1}} f\left(\mathbf{w}^{n}_{t-1}, \left(\mathbf{x}_{j}, y_{j}\right)\right) \right. \\
			\left. + \sum_{\left(\mathbf{x}_{j}, y_{j}\right) \in \mathcal{D}^{n}_{t}} f\left(\mathbf{w}^{n}_{t-1}, \left(\mathbf{x}_{j}, y_{j}\right)\right) \right]\\
		 	= \frac{S^{n}_{t-1}}{S^{n}_{t}}F(\mathbf{w}^{n}_{t-1}; \mathcal{S}^{n}_{t-1}) + \frac{D^{t}_n}{S^{n}_{t}} F(\mathbf{w}^{n}_{t-1}; \mathcal{D}^{n}_{t}).
		 	\end{split}
	 \end{equation}
 \end{small}

 As a result, the global loss function can be further rewritten as
 \begin{equation}\label{eq2}
    \begin{split}
            &\sum_{n=1}^{N} \rho^{n} F\left(\mathbf{w}^{n}_{t-1}; \mathcal{S}^{n}_{t}\right) = \frac{1}{S_{t}} \sum_{n=1}^{N} S^{n}_{t} F(\mathbf{w}^{n}_{t-1}; \mathcal{S}^{n}_{t})\\
            & = \frac{1}{S_{t}}  \! \! \sum_{n=1}^{N} \! \! \left( \! \! \frac{S^{n}_{t-1} F\left(\mathbf{w}^{n}_{t-1}; \mathcal{S}^{n}_{t-1}\right)}{S_{t-1}} S_{t-1} \! \! +  \! \! \frac{D^{n}_{t} F\left(\mathbf{w}^{n}_{t-1};\mathcal{D}^{n}_{t};\right)}{D_t} D_t \! \!\right)\\
            & = \frac{S_{t-1}}{S_{t}} \sum_{n=1}^{N} \Bar{\rho}^{n} \nabla F(\mathbf{w}^{n}_{t-1}; \mathcal{S}^{n}_{t-1})  \! \! +  \! \! \frac{D_t}{S_{t}} \sum_{n=1}^{N} \Tilde{\rho}^{n} \nabla F(\mathbf{w}^{n}_{t-1}; \mathcal{D}^{n}_t),
        \end{split}
 \end{equation}

Taking derivative with respect to $\mathbf{w}_{t-1}$ over both sides of \eqref{eq2}, Lemma \ref{lemma1} can be obtained. This ends the proof.


\section{Proof of Lemma 2}\label{AppenB}
To facilitate the proof, we introduce the following auxiliary variables
\begin{equation}
	\text { Averaged Mini-batch Gradient: } \quad \Bar{\mathbf{g}}^{n}_{t}
	=  	\frac{1}{\tau_{}}\sum_{i = 1}^{\tau_{}}\nabla F\left( \mathbf{w}^{n}_{t-1, i}; \xi \right),  
\end{equation}

\begin{equation}
	\text {Averaged Full-batch Gradient: } \quad \Bar{\mathbf{h}}^{n}_{t}
	=  	\frac{1}{\tau_{}}\sum_{i = 1}^{\tau_{}}\nabla F\left( \mathbf{w}^{n}_{t-1, i}; \mathcal{S}_{t} \right). 
\end{equation}

In FedAVG-ISCC, all devices transmit local model to edge server via over-the-air computation technique to aggregate global model. Therefore, according to \eqref{fedavg_aggre} and \eqref{commerr}, the update of global model  between two consecutive adjacent rounds is given by
\begin{equation}
	\begin{split}
		\mathbf{w}^{n}_{t} - \mathbf{w}^{n}_{t-1} 
		= \boldsymbol{\varepsilon}_t  -\eta \tau_{} \sum_{n=1}^{N}\rho^{n} \Bar{\mathbf{g}}^{n}_{t}.
	\end{split}
\end{equation}


According to \eqref{Lsmooth} in Assumption 1, the improvement on the global loss between two rounds follows that
\begin{equation} \label{Lsmooth1}
	\begin{split}
		F\left(\mathbf{w}_{t}; \mathcal{S}_{t}\right) - F\left(\mathbf{w}_{t-1};\mathcal{S}_{t-1}\right)  \leq  \underbrace{ \frac{L }{2}\left\| \boldsymbol{\varepsilon}_t  -\eta \tau_{} \sum_{n=1}^{N}\rho^{n} \Bar{\mathbf{g}}^{n}_{t} \right\|^2}_{B_1}\\
		+ \underbrace{ \left\langle \nabla F(\mathbf{w}_{t-1};\mathcal{S}_{t-1}), \boldsymbol{\varepsilon}_t - \eta \tau_{} \sum_{n=1}^{N}\rho^{n} \Bar{\mathbf{g}}^{n}_{t} \right\rangle}_{A_1} 
	\end{split}
\end{equation}

Now we aim to find the upper bound for $A_1$ and $B_1$, respectively. Specifically, for $A_1$, we have
\begin{equation}
	\begin{split}
		A_1  \! \! =   \! \!  \left\langle \nabla F(\mathbf{w}_{t-1};\mathcal{S}_{t-1}), \boldsymbol{\varepsilon}_t \right\rangle \! \! -  \! \!  \eta \tau_{} \left\langle  \! \!  \nabla F(\mathbf{w}_{t-1};\mathcal{S}_{t-1}), \sum_{n=1}^{N}\rho^{n}\Bar{\mathbf{h}}^{n}_{t}  \! \!  \right\rangle   \\
		 + \eta \tau_{} \left\langle \nabla F(\mathbf{w}_{t-1};\mathcal{S}_{t-1}), \sum_{n=1}^{N}\rho^{n} \left( \Bar{\mathbf{h}}^{n}_{t} - \Bar{\mathbf{g}}^{n}_{t} \right) \right\rangle \\
		\overset{(a)}{=} \left\langle \nabla F(\mathbf{w}_{t-1};\mathcal{S}_{t-1}), \boldsymbol{\varepsilon}_t \right\rangle   \! \! -   \! \!  \eta \tau_{} \left\langle \nabla F(\mathbf{w}_{t-1};\mathcal{S}_{t-1}), \sum_{n=1}^{N}\rho^{n}\Bar{\mathbf{h}}^{n}_{t} \right\rangle  
	\end{split}
\end{equation}
where $(a)$ comes from the fact that $\mathbb{E} \left( \Bar{\mathbf{g}}^{n}_{t} - \Bar{\mathbf{h}}^{n}_{t}\right) = 0$. 
Similarly, $B_1$ is bounded by
\begin{equation}
	\begin{split}
		B_1 &= \frac{L }{2}\left\|\boldsymbol{\varepsilon}_t - \eta \tau^{}_{} \sum_{n=1}^{N}\rho^{n} \Bar{\mathbf{g}}^{n}_{t} - \eta \tau^{}_{} \sum_{n=1}^{N}\rho^{n}\Bar{\mathbf{h}}^{n}_{t} + \eta \tau^{}_{}\sum_{n=1}^{N}\rho^{n}\Bar{\mathbf{h}}^{n}_{t} \right\|^2 \\
		& \overset{(b)}{\leq} L\eta^2 \tau^{2}_{}\left\| \sum_{n=1}^{N}\rho^{n} \left( \Bar{\mathbf{g}}^{n}_{t} - \Bar{\mathbf{h}}^{n}_{t} \right) \right\|^2  +  L \left\| \boldsymbol{\varepsilon}_t - \eta \tau^{}_{} \sum_{n=1}^{N}\rho^{n} \Bar{\mathbf{h}}^{n}_{t} \right\|^2 \\
		& \overset{(c)}{=}  L\eta^2 \tau^{2}_{} \sum_{n=1}^{N} \left( \rho^{n}\right)^2 \left\| \Bar{\mathbf{g}}^{n}_{t} -\Bar{\mathbf{h}}^{n}_{t} \right\|^2  +  L \left\| \boldsymbol{\varepsilon}_t - \eta \tau^{}_{} \sum_{n=1}^{N}\rho^{n} \Bar{\mathbf{h}}^{n}_{t} \right\|^2 \\
            & \overset{(d)}{\leq } L \tau^{}_{} \eta^2 \sigma^2 \sum_{n=1}^{N} \left( \rho^{n}\right)^2 + L \left\| \boldsymbol{\varepsilon}_t \right\|^2 +  L\eta^2 \tau^{2}_{}\left\| \sum_{n=1}^{N}\rho^{n} \Bar{\mathbf{h}}^{n}_{t} \right\|^2 \\
            &~~~ - 2L\eta \tau^{}_{} \left\langle  \sum_{n=1}^{N}\rho^{n} \Bar{\mathbf{h}}^{n}_{t} , \boldsymbol{\varepsilon}_t \right\rangle 
	\end{split}
\end{equation}
where $(b)$ comes from the fact that $\| a+b\|^2 \leq 2 \| a \|^2 +  2 \| b \|^2$. $(c)$ is achieved due to the fact that clients are independent to each other, i.e., $\mathbb{E} \left\langle \Bar{\mathbf{g}}^{i}_{t} -\Bar{\mathbf{h}}^{i}_{t}, \Bar{\mathbf{g}}^{j}_{t} -\Bar{\mathbf{h}}^{j}_{t} \right\rangle = 0, \forall i \neq j$. (d) comes from \textbf{Assumption 3}. 

As a result, when $0 \leq L\eta^{}_{}  \tau^{}_{}  \leq \frac{1}{2}$, \eqref{Lsmooth1} is reformulated as 
\begin{equation} \label{Lsmooth2}
    \begin{split}
        &F\left(\mathbf{w}_{t}; \mathcal{S}_{t}\right) - F\left(\mathbf{w}_{t-1};\mathcal{S}_{t-1}\right) \leq \left(1-2L\eta \tau^{}_{}  \right)  \left\langle\sum_{n=1}^{N}\rho^{n} \Bar{\mathbf{h}}^{n}_{t}, \boldsymbol{\varepsilon}_t \right\rangle\\ %
        & + \left\langle \nabla F(\mathbf{w}_{t-1};\mathcal{S}_{t-1}) - \sum_{n=1}^{N}\rho^{n} \Bar{\mathbf{h}}^{n}_{t}, \boldsymbol{\varepsilon}_t \right\rangle + L \tau^{}_{} \eta^2 \sigma^2 \sum_{n=1}^{N} \left( \rho^{n}\right)^2  \\
        & + \! \! L\eta^{2}_{} \tau^{2}_{}\left\| \sum_{n=1}^{N}\rho^{n} \Bar{\mathbf{h}}^{n}_{t} \right\|^2  \! \! \!  -  \! \! \!  \eta \tau_{} \left\langle  \! \!  \nabla F(\mathbf{w}_{t-1};\mathcal{S}_{t-1}), \sum_{n=1}^{N}\rho^{n}\Bar{\mathbf{h}}^{n}_{t}  \! \! \! \right\rangle  \! \! \!  +  \! \! \!  L \! \left\| \boldsymbol{\varepsilon}_t \right\|^2  \\
        & \overset{(e)}{\leq} \! \! \! -\frac{\tau^{}_{} \eta}{2}  \! \! \! \left[\left\| \nabla F(\mathbf{w}_{t-1};\mathcal{S}_{t-1}) \right\|^2  \! \!  -  \! \! \left\| \nabla F(\mathbf{w}_{t-1};\mathcal{S}_{t-1}) \! \!  - \! \! \sum_{n=1}^{N}\rho^{n} \Bar{\mathbf{h}}^{n}_{t} \right\|^2  \right. \\
	&\left. + \left\| \sum_{n=1}^{N}\rho^{n} \Bar{\mathbf{h}}^{n}_{t} \right\|^2  \right]  +  \frac{ \left( 1 - 2L\eta^{}_{}  \tau^{}_{} \right) \eta^{}_{}  \tau^{}_{}   }{2} \left\| \sum_{n=1}^{N}\rho^{n} \Bar{\mathbf{h}}^{n}_{t} \right\|^2  + \frac{\left\| \boldsymbol{\varepsilon}_t \right\|^2}{2\eta \tau^{}_{}}  \\
        & + \frac{\eta \tau^{}_{}}{2} \left\| \nabla F(\mathbf{w}_{t-1};\mathcal{S}_{t-1}) -\sum_{n=1}^{N}\rho^{n} \Bar{\mathbf{h}}^{n}_{t} \right\|^2 + \frac{  \left( 1 - 2L\eta^{}_{}  \tau^{}_{} \right) }{2 \eta^{}_{}  \tau^{}_{}} \left\| \boldsymbol{\varepsilon}_t \right\|^2 \\
        & + L \tau^{}_{} \eta^2 \sigma^2 \sum_{n=1}^{N} \left( \rho^{n}\right)^2 + L \left\| \boldsymbol{\varepsilon}_t \right\|^2 +  L\eta^2 \tau^{2}_{}\left\| \sum_{n=1}^{N}\rho^{n} \Bar{\mathbf{h}}^{n}_{t} \right\|^2 \\
        & \overset{}{=}  -\frac{\tau^{}_{} \eta}{2} \left\| \nabla F(\mathbf{w}_{t-1};\mathcal{S}_{t-1}) \right\|^2 + L \tau^{}_{} \eta^{2}_{} \sigma^2 \sum_{n=1}^{N} \left( \rho^{n}\right)^2 +  \frac{ \left\| \boldsymbol{\varepsilon}_t \right\|^2  }{ \eta^{}_{} \tau^{}_{}}\\
        &  + \tau^{}_{} \eta^{}_{} \underbrace{ \left\| \nabla F(\mathbf{w}_{t-1};\mathcal{S}_{t-1}) - \sum_{n=1}^{N}\rho^{n} \Bar{\mathbf{h}}^{n}_{t} \right\|^2}_{C_1} 
    \end{split}
\end{equation}
where (e) derives from the fact that $2\left\langle a,b\right\rangle = \|a\|^2 + \|b\|^2 -\|a-b\|^2$, and $\left\langle \boldsymbol{a}, \boldsymbol{b} \right\rangle \leq \frac{x\left\| \boldsymbol{a}\right\| ^2}{2} + \frac{\left\| \boldsymbol{b}\right\| ^2}{2x}$ with $x = \tau^{}_{} \eta \geq 0$. 

To proof Lemma 2, we first derive the improvement at the first communication round, and then extend to the rest communication rounds.

\textit{1) Improvement in the first communication round:} 
The ML model is updated based on initialization $\mathbf{w}_{0}$ over the new sensed dataset $\mathcal{D}_1$ in the current round.
Therefore, using the Lipschitz-smooth property and Jensen inequality,  $C_1$ can be expressed as:
\begin{equation}\label{C_1}
	\begin{split}
		C_1 &= \left\|  \nabla F(\mathbf{w}_{0};\mathcal{D}_{1}) -  \frac{1}{\tau_{}}  \sum_{n=1}^{N}\rho^{n} \sum_{i = 1}^{\tau_{}}\nabla F\left( \mathbf{w}^{n}_{0,i}; \mathcal{D}_{1} \right) \right\|^2\\
		& \leq \frac{1}{\tau_{}} \sum_{n=1}^{N}\rho^{n} \sum_{i = 1}^{\tau_{}}  \left\|  \nabla F(\mathbf{w}_{0};\mathcal{D}_{1}) -  \nabla F\left( \mathbf{w}^{n}_{0,i}; \mathcal{D}_{1} \right) \right\|^2 \\ 
		& \leq \frac{L^2}{\tau_{}} \sum_{n=1}^{N}\rho^{n} \sum_{i = 1}^{\tau_{}} \underbrace{ \left\|  \mathbf{w}_{0} -   \mathbf{w}^{n}_{0,i} \right\|^2}_{D_1}. 
	\end{split}
\end{equation}
Furthermore, using the fact that $\| a+b\|^2 \leq 2 \| a \|^2 +  2 \| b \|^2$, for $\forall i$ in \eqref{C_1}, we have
\begin{equation} \label{D_1}
	\begin{split}
		& D_1 = \eta^2 \left\| \sum_{j = 1}^{i} \nabla F\left( \mathbf{w}^{n}_{0,i}; \xi \right) \right\|^2\\
		& \leq 2 \eta^2 \left\| \sum_{j = 1}^{i} \left[\nabla F\left( \mathbf{w}^{n}_{0,i}; \xi \right) - \nabla F\left( \mathbf{w}^{n}_{0,i}; \mathcal{D}^{n}_{1} \right)\right] \right\|^2 \\
            & ~~~ + 2\eta^2 \left\| \sum_{j = 1}^{i} \nabla F\left( \mathbf{w}^{n}_{0,i}; \mathcal{D}^{n}_{1} \right) \right\|^2 \\
		& \overset{(f)}{\leq} 2 i \eta^2 \sum_{j = 1}^{i} \sigma^2 +  2\eta^2 i \sum_{j = 1}^{i} \left\|\nabla F\left( \mathbf{w}^{n}_{0,j}; \mathcal{D}^{n}_{1} \right) \right\|^2 \\
		& \leq 2 i \eta^2 \sigma^2 +  2\eta^2 i \sum_{j = 1}^{\tau_{} } \left\|\nabla F\left( \mathbf{w}^{n}_{0,j}; \mathcal{D}^{n}_{1} \right) \right\|^2
	\end{split}
\end{equation}
where $(f)$ is derived from Cauchy–Schwarz inequality and \textbf{Assumption 3}.  Using the equation $\sum_{i = 1}^{\tau_{}} i = \frac{\tau_{}\left(\tau_{} - 1 \right)}{2}$, we obtain
\begin{equation}
	\begin{split}
		& \sum_{i = 1}^{\tau_{}}  \left\|  \mathbf{w}^{n}_{0} -   \mathbf{w}^{n}_{0,i} \right\|^2 \leq \eta^2 \tau_{}\left( \tau_{} -1\right) \left(  \sigma^2 + \sum_{j = 1}^{\tau_{} } \left\|\nabla F\left( \mathbf{w}^{n}_{0,j}; \mathcal{D}^{n}_{1} \right) \right\|^2 \right) \\
		& \leq \eta^2 \tau_{}\left( \tau_{} -1\right) \left( \sigma^2 + 2 \sum_{j = 1}^{\tau_{} } \left\|  \nabla F(\mathbf{w}^{n}_{0};\mathcal{D}^{n}_{1}) \right\|^2 \right) \\
            & + 2 \eta^2 \tau_{}\left( \tau_{} -1\right) \sum_{j = 1}^{\tau_{} } \left( \left\|\nabla F\left( \mathbf{w}^{n}_{0,j}; \mathcal{D}^{n}_{1} \right)  - \nabla F(\mathbf{w}^{n}_{0};\mathcal{D}^{n}_{1}) \right\|^2  \right)  \\
		& \overset{(g)}{\leq} \eta^2 \tau_{} \left( \tau_{} -1\right) \left( \sigma^2 +  2 \sum_{j = 1}^{\tau_{} } \left\|  \nabla F(\mathbf{w}^{n}_{0};\mathcal{D}^{n}_{1}) \right\|^2 \right)  \\
            & + 2 L^2 \eta^2 \tau_{}\left( \tau_{} -1\right) \sum_{j = 1}^{\tau_{} }  \left\| \mathbf{w}^{n}_{0,j}  - \mathbf{w}^{n}_{0} \right\|^2 
	\end{split}
\end{equation}
where $(g)$ follows Lipschitz-smooth property.
After rearranging, we have
\begin{equation}\label{}
	\begin{split}
		& \sum_{i = 1}^{\tau_{}}  \left\|  \mathbf{w}^{n}_{0} -   \mathbf{w}^{n}_{0,i} \right\|^2 \leq \\
  &\frac{\eta^2 \sigma^2 \tau_{}\left( \tau_{} -1\right)}{1 - 2 L^2 \eta^2 \tau_{}\left( \tau_{} -1\right)} + \frac{ 2 \eta^2  \tau^{2}_{}\left( \tau_{} -1\right)}{1 - 2 L^2 \eta^2 \tau_{}\left( \tau_{} -1\right)}\left\|  \nabla F(\mathbf{w}^{n}_{0};\mathcal{D}^{n}_{1}) \right\|^2\\
	\end{split}
\end{equation}
Therefore, \eqref{C_1} is bounded by
\begin{equation}\label{C_1_1}
	\begin{split}
		C_1 & \leq   \frac{L^2\eta^2 \sigma^2 \left( \tau_{} -1\right)}{1 - 2 L^2 \eta^2 \tau_{}\left( \tau_{} -1\right)} + \frac{ 2L^2 \eta^2  \tau^{}_{}\left( \tau_{} -1\right)}{1 - 2 L^2 \eta^2 \tau_{}\left( \tau_{} -1\right)}\left\|  \nabla F(\mathbf{w}^{n}_{0};\mathcal{D}^{n}_{1}) \right\|^2\\
		& = \frac{ L^2\eta^2 \sigma^2 \left( \tau_{} -1\right)}{1-A} + \frac{A}{1-A} \sum_{n=1}^{N}\rho^{n}  \left\|  \nabla F(\mathbf{w}^{n}_{0};\mathcal{D}^{n}_{1}) \right\|^2,
	\end{split}
\end{equation}
where  $A = 2 L^2 \eta^2 \tau_{}\left( \tau_{} -1\right)$. Plug $\eqref{C_1_1}$ back into  $\eqref{Lsmooth2}$, we have
\begin{equation} \label{Lsmooth2_1_1}
	\begin{split}
		&\mathbb{E} \left(F\left(\mathbf{w}_{1}; \mathcal{S}_{1}\right) - F\left(\mathbf{w}_{0};\mathcal{S}_{0}\right)\right) \leq \\ 
            &- \frac{\eta\tau_{}}{2}\left\| \nabla F(\mathbf{w}_{0};\mathcal{D}_{1}) \right\|^2  + L \tau^{}_{} \eta^2 \sigma^2 \sum_{n=1}^{N} \left( \rho^{n}\right)^2 +  \frac{ 1 }{ \eta^{}_{} \tau^{}_{}} \left\| \boldsymbol{\varepsilon}_{1} \right\|^2  \\  
		& + \eta^{}_{}\tau^{}_{} \left( \frac{L^2 \eta^2 \sigma^2  \left( \tau_{} -1\right)}{1 - A} + \frac{ A}{1 - A} \sum_{n=1}^{N}\rho^{n} \left\|  \nabla F(\mathbf{w}^{n}_{0};\mathcal{D}^{n}_{1}) \right\|^2 \right) \\
		& \overset{(h)}{\leq} -  \frac{\eta\tau_{}}{2} \left( 1 - \frac{2 \alpha^2 A}{1 - A }\right) \left\| \nabla F(\mathbf{w}_{0};\mathcal{D}_{1}) \right\|^2 \! \!  + L \tau^{}_{} \eta^2 \sigma^2 \sum_{n=1}^{N} \left( \rho^{n}\right)^2 \\ 
		&  + \eta^{}_{} \tau^{}_{} \frac{L^2 \eta^2 \sigma^2  \left( \tau_{} -1\right)}{1 - A}  + \frac{ \beta^2 A  \eta^{}_{}\tau^{}_{} }{1 - A}  +  \frac{ 1 }{ \eta^{}_{} \tau^{}_{}} \left\| \boldsymbol{\varepsilon}_t \right\|^2 \\ 
		& \overset{(i)}{\leq} -  \frac{\eta\tau_{}}{4} \left( 2 -  \alpha^2  \right) \left\| \nabla F(\mathbf{w}_{0};\mathcal{D}_{1}) \right\|^2  + L \tau^{}_{} \eta^2 \sigma^2 \sum_{n=1}^{N} \left( \rho^{n}\right)^2 \\
		& + \eta^{}_{} \tau^{}_{} \frac{ 5L^2 \eta^2 \sigma^2  \left( \tau_{} -1\right)}{4}+ \frac{ \beta^2  \eta^{}_{}\tau^{}_{} }{4} +  \frac{ 1 }{ \eta^{}_{} \tau^{}_{}} \left\| \boldsymbol{\varepsilon}_t \right\|^2.  \\ 
	\end{split}
\end{equation}
where $(h)$ is achieved due to \textbf{Assumption 3} and  \textbf{Assumption 4}. $(i)$ is derived from $A = 2 L^2 \eta^2 \tau_{}\left( \tau_{} -1\right) \leq \frac{1}{5}$. 


\textit{2) Improvement in the rest communication rounds:} For the rest communication rounds, the ML model is updated based on both the accumulative dataset $\mathcal{S}_{t-1}$ and the newly sensed dataset $\mathcal{D}_{t}$. According to \textbf{Lemma 1 }, $C_1$ can be expressed as
\begin{equation}\label{C_1_2}
	\begin{split} 
		& C_1 \overset{}{\leq} \frac{1}{\tau_{}}\sum_{i = 1}^{\tau_{}}  \left\|  \nabla F(\mathbf{w}^{}_{t-1};\mathcal{S}^{}_{t-1}) - \sum_{n=1}^{N}\rho^{n}  \nabla F\left( \mathbf{w}^{n}_{t-1,i}; \mathcal{S}^{n}_{t} \right) \right\|^2 \\
		& \overset{(j)}{\leq} \! \! \frac{2}{\tau_{}}\sum_{i = 1}^{\tau_{}} \! \! \left( \! \!  \frac{S^{2}_{t-1}}{S^{2}_{t}} \! \! \sum_{n=1}^{N} \Bar{\rho }^{n}  \left\| \nabla F(\mathbf{w}^{}_{t-1};\mathcal{S}^{}_{t-1}) \! \! - \! \! \nabla F(\mathbf{w}^{n}_{t-1, i}; \mathcal{S}^{n}_{t-1})  \right\|^2  \right. \\
		& \left. - \sum_{n=1}^{N} \Tilde{\rho }^{n}  \left\| \frac{D_t}{S_{t}} \nabla F(\mathbf{w}^{}_{t-1};\mathcal{S}^{}_{t-1}) - \frac{D_t}{S_{t}}\nabla F(\mathbf{w}^{n}_{t-1, i}; \mathcal{D}^{n}_t) \right\|^2 \right)  \\
		& \overset{(k)}{\leq} \frac{2L^2}{\tau_{}}\sum_{i = 1}^{\tau_{}} \left(  \left( \frac{S_{t-1}}{S_{t}}\right)^2 \sum_{n=1}^{N} \Bar{\rho }^{n} \underbrace{ \left\|  \mathbf{w}_{t-1} -   \mathbf{w}^{n}_{t-1,i} \right\|^2}_{D_2}   \right. \\
		& \left. +  \left( \frac{D_t}{S_{t}} \right)^2  \sum_{n=1}^{N} \Tilde{\rho }^{n} \underbrace{ \left\|  \mathbf{w}_{t-1} -   \Bar{\mathbf{w}}^{n}_{t-1,i} \right\|^2}_{D_3}  \right),
	\end{split}
\end{equation}
where $(j)$ and $(k)$ come from the fact that $\| a+b\|^2 \leq 2 \| a \|^2 +  2 \| b \|^2$, and Lipschitz-smooth property, respectively. $\mathbf{w}^{n}_{t-1,i}$ and $\Bar{\mathbf{w}}^{n}_{t-1,i}$ are the updated model after $i$-th local step based on the datasets $\mathcal{S}^{n}_{t-1}$ and $\mathcal{D}_t$, respectively. Similar to \eqref{D_1}, for $\forall i$ in \eqref{C_1_2}, we have
\begin{equation} \label{D_2}
	\begin{split}
		& D_2 = 2\eta^2 \left\| \sum_{j = 1}^{i} \nabla F\left( \mathbf{w}^{n}_{t-1,j}; \mathcal{S}^{n}_{t-1} \right) \right\|^2 \\
		& + 2 \eta^2 \left\| \sum_{j = 1}^{i} \left[\nabla F\left( \mathbf{w}^{n}_{t-1,j}; \xi \right) - \nabla F\left( \mathbf{w}^{n}_{t-1,j};\mathcal{S}^{n}_{t-1} \right)\right] \right\|^2 \\
		& \leq 2 i \eta^2 \sigma^2 +  2\eta^2 i \sum_{j = 1}^{\tau_{} } \left\|\nabla F\left( \mathbf{w}^{n}_{t-1,j};\mathcal{S}^{n}_{t-1} \right) \right\|^2. 
	\end{split}
\end{equation}
Using  the equation $\sum_{i = 1}^{\tau_{}} i = \frac{\tau_{}\left(\tau_{} - 1 \right)}{2}$, we have
\begin{equation}
	\begin{split}
		& \sum_{i = 1}^{\tau_{}}  \left\|  \mathbf{w}_{t-1} -   \mathbf{w}^{n}_{t-1,i} \right\|^2 \leq \\
  & \eta^2 \tau_{}\left( \tau_{} -1\right) \left(\sigma^2 + \sum_{j = 1}^{\tau_{} } \left\|\nabla F\left( \mathbf{w}^{n}_{t-1,j};\mathcal{S}^{n}_{t-1} \right) \right\|^2\right)  \\
		& \leq \eta^2 \tau_{}\left( \tau_{} -1\right) \left(\sigma^2 + 2 \sum_{i = 1}^{\tau_{} } \left\|\nabla F\left( \mathbf{w}^{n}_{t-1};\mathcal{S}^{n}_{t-1} \right) \right\|^2\right) \\
  &+ 2 L^2 \eta^2 \tau_{}\left( \tau_{} -1\right) \sum_{j = 1}^{\tau_{} }  \left\| \mathbf{w}^{n}_{t-1,j}  - \mathbf{w}^{n}_{t-1} \right\|^2 . 
	\end{split}
\end{equation}
After rearranging, we have
\begin{equation}
	\begin{split}
		& \sum_{i = 1}^{\tau_{}}  \left\|  \mathbf{w}_{t-1} -   \mathbf{w}^{n}_{t-1,i} \right\|^2 \leq \frac{\eta^2 \sigma^2 \tau_{}\left( \tau_{} -1\right)}{1 - 2 L^2 \eta^2 \tau_{}\left( \tau_{} -1\right)} \\
  &+ \frac{ 2 \eta^2  \tau^{2}_{}\left( \tau_{} -1\right)}{1 - 2 L^2 \eta^2 \tau_{}\left( \tau_{} -1\right)}\left\|  \nabla F(\mathbf{w}^{n}_{t-1};\mathcal{S}^{n}_{t-1}) \right\|^2.
	\end{split}
\end{equation}
Similarly, we can bound $D_3$ by the same way, which  can be presented by
\begin{equation}
	\begin{split}
		& \sum_{i = 1}^{\tau_{}}  \left\|  \mathbf{w}_{t-1} -   \Bar{\mathbf{w}}^{n}_{t-1,i} \right\|^2 \leq \frac{\eta^2 \sigma^2 \tau_{}\left( \tau_{} -1\right)}{1 - 2 L^2 \eta^2 \tau_{}\left( \tau_{} -1\right)} \\
  &+ \frac{ 2 \eta^2  \tau^{2}_{}\left( \tau_{} -1\right)}{1 - 2 L^2 \eta^2 \tau_{}\left( \tau_{} -1\right)}\left\|  \nabla F(\mathbf{w}^{n}_{t-1};\mathcal{D}^{n}_{t}) \right\|^2.
	\end{split}
\end{equation}

As a result, \eqref{C_1_2} is bounded by
\begin{equation}\label{C_1_3}
	\begin{split} 
		& C_1  \leq \frac{ A}{1 - A} \left[  \left( \frac{S_{t-1}}{S_{t}} \right)^2 \sum_{n=1}^{N} \Bar{\rho }^{n} \left\|  \nabla F(\mathbf{w}^{n}_{t-1};\mathcal{S}^{n}_{t-1}) \right\|^2 \right. \\
		& \left. + \left(  \frac{D_t}{S_{t}} \right)^2 \sum_{n=1}^{N} \Tilde{\rho }^{n} \left\|  \nabla F(\mathbf{w}^{n}_{t-1};\mathcal{D}^{n}_{t}) \right\|^2  \right] \\  
		& + \left( 1 - \frac{2S_{t-1}D_{t}}{S^{2}_{t}} \right) \frac{ L^2 \eta^2 \sigma^2\left( \tau_{} -1\right)}{1 - A}.   
	\end{split}
\end{equation}

Plug $\eqref{C_1_3}$ back into $\eqref{Lsmooth2}$, we have
\begin{equation} \label{Lsmooth2_1_2}
	\begin{split}
		&\mathbb{E} \left(F\left(\mathbf{w}_{t}; \mathcal{S}_{t}\right) - F\left(\mathbf{w}_{t-1};\mathcal{S}_{t-1}\right)\right) \overset{(l)}{\leq} \frac{ \eta^{}_{}\tau^{}_{} A \alpha^2 }{1 - A} \frac{D^{2}_{t}}{S^{2}_{t}} G_{t}   +  \frac{ 1 }{ \eta^{}_{} \tau^{}_{}} \left\| \boldsymbol{\varepsilon}_t \right\|^2 \\ 
            &  - \! \! \frac{\tau^{}_{} \eta}{2} \! \! \left( 1 -  \frac{  2 A \alpha^2 }{1 - A}  \frac{S^{2}_{t-1}}{S^{2}_{t}}  \right)  \! \!  \left\| \nabla F(\mathbf{w}_{t-1};\mathcal{S}_{t-1}) \right\|^2  \! \! + L \tau^{}_{} \eta^{2}_{} \sigma^2 \sum_{n=1}^{N} \left( \rho^{n}\right)^2 \\
            & +  \left( 1 - \frac{2S_{t-1}D_{t}}{S^{2}_{t}} \right)  \! \!  \left(  \frac{ L^2 \eta^3 \sigma^2 \tau_{} \left( \tau_{} -1\right)}{1 - A}  \! \!  +  \! \! \frac{ \eta^{}_{}\tau^{}_{} A}{1 - A} \beta^2 \right) \\ 
			& \overset{(m)}{\leq} \! \! \! \! -\frac{\tau^{}_{} \eta}{ 4 } \left( 2 - \alpha^2 \right)  \left\| \nabla F(\mathbf{w}_{t-1};\mathcal{S}_{t-1}) \right\|^2  \! \! +  \! \!  L \tau^{}_{} \eta^{2}_{} \sigma^2 \sum_{n=1}^{N} \left( \rho^{n}\right)^2  \! \!  +  \! \!  \frac{ \left\| \boldsymbol{\varepsilon}_t \right\|^2 }{ \eta^{}_{} \tau^{}_{}} \\
			& +  \left( 1+ \frac{ S^{2}_{t} }{ 4 S^{2}_{t-1} } \right)  L^2 \eta^3 \sigma^2 \tau_{} \left( \tau_{} -1\right) + \frac{ \eta^{}_{}\tau^{}_{}S^{2}_{t}  }{4S^{2}_{t-1}}  \beta^2   +  \frac{ \eta^{}_{}\tau^{}_{} \alpha^2 }{4}  \frac{D^{2}_{t}}{S^{2}_{t-1}} G_{t} .
	\end{split}
\end{equation}
where $(l)$ comes from \textbf{Assumption 2} and \textbf{Assumption 4}. $(m)$ follows the fact that $ A \leq \frac{S^{2}_{t}}{S^{2}_{t} + 4 S^{2}_{t-1}}$. 
This completes the proof.

\section{Proof of Lemma 4}\label{AppenC}
In FedSGD-ISCC, all devices transmit the gradients to edge server for gradient aggregation via over-the-air computation technique. Then, the global model is updated based on the aggregated gradient. Therefore, according to \eqref{Update_fedsgd} and \eqref{commerr}, the update of global model  between two consecutive adjacent rounds is given by
\begin{equation}
	\mathbf{w}_{t}- \mathbf{w}_{t-1} = \boldsymbol{\varepsilon}_1 - \sum_{n=1}^{N} \rho^{n} \nabla F\left( \mathbf{w}^{n}_{t-1};\mathcal{S}^{n}_{t} \right)
\end{equation}

According to \textbf{assumption 1}, the improvement on the global loss can be expressed as:
	\begin{equation}\label{Lsmooth3}
		\begin{split}
			& F\left(\mathbf{w}_{t}; \mathcal{S}_{t}\right) \!-\! F\left(\mathbf{w}_{t-1};\mathcal{S}_{t-1}\right) \leq \\ 
		&\underbrace{ \!\eta\! \left\langle \nabla F(\mathbf{w}_{t-1};\mathcal{S}_{t-1}), \! \boldsymbol{\varepsilon}_t \!-\! \sum_{n=1}^{N} \rho^{n} \nabla F\left( \mathbf{w}^{n}_{t-1};\mathcal{S}^{n}_{t} \right) \right\rangle}_{A_2} \\
  & + \underbrace{ \frac{L\eta^2}{2} \! \!\left\| - \sum_{n=1}^{N} \rho^{n} \nabla F\left( \mathbf{w}^{n}_{t-1};\mathcal{S}^{n}_{t} \right)\! +\! \boldsymbol{\varepsilon}_t \right\|^2}_{B_2}.
		\end{split}
	\end{equation} 


Now we aim to find the upper bound for $A_2$ and $B_2$, respectively. 
Specifically, for $A_2$, we have
\begin{equation}
	\begin{split}
	A_2 = &- \frac{\eta }{2} \left\| \nabla F(\mathbf{w}_{t-1};\mathcal{S}_{t-1}) \right\|^2 - \frac{\eta }{2} \left\|   \sum_{n=1}^{N} \rho^{n} \nabla F\left( \mathbf{w}^{n}_{t-1};\mathcal{S}^{n}_{t} \right) \right\|^2 \\
	&  + \frac{\eta }{2} \left\| \nabla F(\mathbf{w}_{t-1};\mathcal{S}_{t-1}) -  \sum_{n=1}^{N} \rho^{n} \nabla F\left( \mathbf{w}^{n}_{t-1};\mathcal{S}^{n}_{t} \right) \right\|^2 \\
        & + \left\langle \nabla F(\mathbf{w}_{t-1};\mathcal{S}_{t-1}),  \boldsymbol{\varepsilon}_t \right\rangle 
\end{split}
\end{equation}
For $B_2$, we have
\begin{equation}
    \begin{split}
        B_2  = & \frac{L \eta^2 }{2} \left\|   \sum_{n=1}^{N} \rho^{n} \nabla F\left( \mathbf{w}^{n}_{t-1};\mathcal{S}^{n}_{t} \right) \right\|^2 + \frac{L \eta^2 }{2} \left\|  \boldsymbol{\varepsilon}_t  \right\|^2 \\
        &- L \eta^2 \left\langle  \sum_{n=1}^{N} \rho^{n} \nabla F\left( \mathbf{w}^{n}_{t-1};\mathcal{S}^{n}_{t} \right),  \boldsymbol{\varepsilon}_t \right\rangle 
    \end{split}
\end{equation}

Similar to \eqref{Lsmooth2}, when $\eta \leq \frac{1}{L}$, the \eqref{Lsmooth3} can be further bounded by
\begin{equation}\label{Lsmooth3_1}
    \begin{split}
        &F\left(\mathbf{w}_{t}; \mathcal{S}_{t}\right) \!-\! F\left(\mathbf{w}_{t-1};\mathcal{S}_{t-1}\right)  \leq - \frac{\eta }{2} \left\| \nabla F(\mathbf{w}_{t-1};\mathcal{S}_{t-1}) \right\|^2 \\
        & + \frac{\eta }{2} \left(L \eta  -1 \right) \left\|   \sum_{n=1}^{N} \rho^{n} \nabla F\left( \mathbf{w}^{n}_{t-1};\mathcal{S}^{n}_{t} \right) \right\|^2  + \frac{L \eta^2 }{2} \left\|  \boldsymbol{\varepsilon}_t  \right\|^2 \\ 
        &+ \frac{\eta }{2} \left\| \nabla F(\mathbf{w}_{t-1};\mathcal{S}_{t-1})-  \sum_{n=1}^{N} \rho^{n} \nabla F\left( \mathbf{w}^{n}_{t-1};\mathcal{S}^{n}_{t} \right) \right\|^2  \\
         &+ \eta \left\langle \nabla F(\mathbf{w}_{t-1};\mathcal{S}_{t-1}) - \sum_{n=1}^{N} \rho^{n} \nabla F\left( \mathbf{w}^{n}_{t-1};\mathcal{S}^{n}_{t} \right),  \boldsymbol{\varepsilon}_t \right\rangle \\
         & + \eta \left( 1 - L\eta \right) \left\langle   \sum_{n=1}^{N} \rho^{n} \nabla F\left( \mathbf{w}^{n}_{t-1};\mathcal{S}^{n}_{t} \right),  \boldsymbol{\varepsilon}_t \right\rangle\\
         &\overset{}{\leq}  - \frac{\eta }{2} \left\| \nabla F(\mathbf{w}_{t-1};\mathcal{S}_{t-1}) \right\|^2 + \eta \left\|  \boldsymbol{\varepsilon}_t  \right\|^2 \\
         &  + \underbrace{ \eta \left\| \nabla F(\mathbf{w}_{t-1};\mathcal{S}_{t-1})-  \sum_{n=1}^{N} \rho^{n} \nabla F\left( \mathbf{w}^{n}_{t-1};\mathcal{S}^{n}_{t} \right) \right\|^2}_{C_2}.
    \end{split}
\end{equation}

Similar to the proof of lemma 2, we first derive the improvement of loss function at the first communication round, and then extend to the rest communication rounds.

\textit{1) Improvement in the first communication round:} The ML model is updated based on initialization $\mathbf{w}_{0}$ over the new sensed dataset $\mathcal{D}_1$ in the current round. Therefore, applying the Lipschitz-smooth property and Cauchy–Schwarz inequality, $C_2$ is bounded by
\begin{equation}
	\begin{split}
		C_2 & \leq \eta \sum_{n=1}^{N} \rho^{n} \left\| \nabla F(\mathbf{w}_{t-1};\mathcal{D}_{1})-  \nabla F\left( \mathbf{w}^{n}_{t-1};\mathcal{D}^{n}_{1} \right) \right\|^2 \\
		 & \leq L^2 \eta \sum_{n=1}^{N} \rho^{n} \left\| \mathbf{w}^{n}_{t-1} - \mathbf{w}^{n}_{t} \right\|^2 \\
		 & \leq L^2 \eta^3 \sum_{n=1}^{N} \rho^{n} \left\| \nabla F\left( \mathbf{w}^{n}_{t-1};\mathcal{D}^{n}_{1} \right) \right\|^2 \\
		 & \leq L^2  \alpha^2 \eta^3 \left\| \nabla F\left( \mathbf{w}^{}_{t-1};\mathcal{D}^{}_{1} \right) \right\|^2 + L^2 \eta^3 \beta^2.
	\end{split}
\end{equation}

As a result, when $\eta \leq \frac{1}{2L}$, \eqref{Lsmooth3_1} is reformulated as 
\begin{equation} \label{Lsmooth3_1_1}
    \begin{split}
        &F\left(\mathbf{w}_{t}; \mathcal{S}_{t}\right) \!-\! F\left(\mathbf{w}_{t-1};\mathcal{S}_{t-1}\right)  \leq\\
        &- \frac{\eta }{2} \left( 1 - 2L^2\eta^2 \alpha^2  \right)  \left\| \nabla F(\mathbf{w}_{0};\mathcal{D}_{1}) \right\|^2  + \eta \left\|  \boldsymbol{\varepsilon}_t  \right\|^2 +  + L^2 \eta^3   \beta^2  \\
        &\leq - \frac{\eta }{4} \left( 2 - \alpha^2  \right)  \left\| \nabla F(\mathbf{w}_{0};\mathcal{D}_{1}) \right\|^2  + \eta \left\|  \boldsymbol{\varepsilon}_t  \right\|^2 +  \frac{ \eta \beta^2 }{4}   .
    \end{split}
\end{equation}

%

\textit{2) Improvement in the rest communication rounds:} For the rest communication rounds, the ML model is updated based on both the accumulative dataset $\mathcal{S}_{t-1}$ and the newly sensed dataset $\mathcal{D}_{t}$. According to \textbf{Lemma 1 }, $C_2$ is expressed as 
\begin{small}
\begin{equation}\label{C_2}
    \begin{split}
        & C_2 \leq   \eta  \left\| \nabla F(\mathbf{w}_{t-1};\mathcal{S}_{t-1})  -  \frac{S_{t-1}}{S_{t}} \sum_{n=1}^{N}\Bar{\rho}^{n} \nabla F(\mathbf{w}^{n}_{t-1}; \mathcal{S}^{n}_{t-1}) \right. \\
		& \left. - \frac{D_t}{S_{t}} \sum_{n=1}^{N} \Tilde{\rho}^{n} \nabla F(\mathbf{w}^{n}_{t-1}; \mathcal{D}^{n}_t) \right\|^2\\
        & \! \! \leq   2 L^2 \eta \left( \frac{S^{2}_{t-1} }{S^{2}_{t}}  \sum_{n=1}^{N}\Bar{\rho}^{n} \left\|  \mathbf{w}_{t-1} \! \! - \Bar{ \mathbf{w}}^{n}_{t}  \right\|^2 + \frac{  D^{2}_t}{S^{2}_{t}} \sum_{n=1}^{N} \Tilde{\rho}^{n} \left\|  \mathbf{w}_{t-1}  - \Tilde{\mathbf{w}}^{n}_{t}  \right\|^2 \right)  \\
        & \leq 2 \alpha^2 L^2 \eta^3  \left(  \frac{ S^{2}_{t-1} }{S^{2}_{t}}  \left\| \nabla F(\mathbf{w}_{t-1};\mathcal{S}_{t-1})  \right\|^2 +   \frac{  D^{2}_t}{S^{2}_{t}} \left\| \nabla F(\mathbf{w}_{t-1};\mathcal{D}_{t})  \right\|^2 \right)  \\
        &+ 2 \beta^2  L^2 \eta^3  \left(  \frac{ S^{2}_{t-1} }{S^{2}_{t}}   +  \frac{  D^{2}_t}{S^{2}_{t}}  \right) .
    \end{split}
\end{equation}
\end{small}
where $\Bar{ \mathbf{w}}^{n}_{t} $ and $\Tilde{\mathbf{w}}^{n}_{t}$ are the updated models based on the datasets $\mathcal{S}^{n}_{t-1}$ and $\mathcal{D}_t$, respectively.

As a result, when $ \eta \leq \frac{1}{2\sqrt{2}L} \frac{S_{t}}{S_{t-1}}$, we have 
\begin{equation}\label{Lsmooth3_1_2}
    \begin{split}
        &F\left(\mathbf{w}_{t}; \mathcal{S}_{t}\right) - F\left(\mathbf{w}_{t-1};\mathcal{S}_{t-1}\right)  \leq  \\
        &\leq  - \frac{\eta }{2} \left( 1 -\frac{\alpha^2}{2} \right)  \left\| \nabla F(\mathbf{w}_{t-1};\mathcal{S}_{t-1}) \right\|^2  + \eta \left\|  \boldsymbol{\varepsilon}_t  \right\|^2 \\ 
        &+ \frac{ \eta}{4} \left( 1 - \frac{2S^{}_{t-1} D^{}_{t}}{S^{2}_{t}} \right) \frac{S^{2}_{t}}{S^{2}_{t-1}} \beta^2 + \frac{\eta \alpha^2  }{4 } \frac{D^{2}_{t}}{S^{2}_{t-1}} \left\| \nabla F(\mathbf{w}_{t-1};\mathcal{D}_{t})  \right\|^2 \\
         &\leq - \frac{\eta }{4} \left( 2 - \alpha^2  \right)  \left\| \nabla F(\mathbf{w}_{t-1};\mathcal{S}_{t-1}) \right\|^2  + \eta \left\|  \boldsymbol{\varepsilon}_t  \right\|^2  \\
         &+ \frac{ \eta}{4} \frac{S^{2}_{t}}{S^{2}_{t-1}} \beta^2 + \frac{\eta \alpha^2  }{4 } \frac{D^{2}_{t}}{S^{2}_{t-1}}  G_{t}. 
    \end{split}
\end{equation}
This ends the proof.

%
%

\ifCLASSOPTIONcaptionsoff
  \newpage
\fi


\begin{thebibliography}{1}

\bibitem{6Gwhitepaper}
Peltonen, E. \emph{et. al.}, ``6G white paper on edge intelligence,'' arXiv preprint \emph{arXiv:2004.14850}, 2020. Available:  https://doi.org/10.48550/arXiv.2004.14850.

\bibitem{SaadNetwork2020}
W. Saad, M. Bennis and M. Chen, ``A vision of 6G wireless systems: Applications, trends, technologies, and open research problems,'' \emph{IEEE Network}, vol. 34, no. 3, pp. 134-142, May/Jun. 2020.

\bibitem{ZhangWC2023}
P. Zhang, H. Yang, Z. Feng, Y. Cui, J. Dai, X. Qin, J. Li, and Q. Zhang, ``Toward intelligent and efficient 6G networks: JCSC enabled on-purpose machine communications,'' \emph{IEEE Wireless Commun.}, vol. 30, no. 1, pp. 150-157, Feb. 2023.

\bibitem{FengNetwork2021}
Z. Feng, Z. Wei, X. Chen, H. Yang, Q. Zhang and P. Zhang, ``Joint communication, sensing, and computation enabled 6G intelligent machine system,'' \emph{IEEE Network}, vol. 35, no. 6, pp. 34-42, Nov./Dec. 2021.

\bibitem{ZhuSCIC2023}
G. Zhu, Z. Lyu, X. Jiao, P. Liu, M. Chen, J. Xu, S. Cui, P. Zhang, ``Pushing AI to wireless network edge: An overview on integrated sensing, communication, and computation towards 6G,'' \emph{Sci. China Inf. Sci}, vol. 66, no. 3,  2023.

\bibitem{LetaiefJSAC2022}
K. B. Letaief, Y. Shi, J. Lu and J. Lu, ``Edge artificial intelligence for 6G: Vision, enabling technologies, and applications,'' \emph{IEEE J. Sel. Areas Commun.}, vol. 40, no. 1, pp. 5-36, Jan. 2022.

\bibitem{McMahanAISTATS2017}
B. McMahan, E. Moore, D. Ramage, S. Hampson, and B. A. y Arcas, ``Communication-efficient learning of deep networks from decentralized data,'' in \emph{Proc. Int. Conf. Artif. Intell. Stat. (AISTATS)}, Lauderdale, FL, USA, 2017, pp. 1273–1282. 


\bibitem{LiSPM2020}
T. Li, A. K. Sahu, A. Talwalkar and V. Smith, ``Federated learning: challenges, methods, and future directions,'' \emph{IEEE Signal Processing Mag.}, vol. 37, no. 3, pp. 50-60, May 2020.

\bibitem{WangJSAC2019}
S. Wang et al., ``Adaptive federated learning in resource constrained edge computing systems,'' \emph{IEEE J. Sel. Areas Commun.}, vol. 37, no. 6, pp. 1205-1221, Jun. 2019.

\bibitem{ChenTWC2021}
M. Chen, Z. Yang, W. Saad, C. Yin, H. V. Poor and S. Cui, ``A joint learning and communications framework for federated learning over wireless networks,'' \emph{IEEE Trans. Wireless Commun.}, vol. 20, no. 1, pp. 269-283, Jan. 2021.

\bibitem{JiangTMC2024}
Z. Jiang et al., ``Computation and communication efficient federated learning with adaptive model pruning,'' \emph{IEEE Trans. Mobile Comput.}, vol. 23, no. 3, pp. 2003-2021, Mar. 2024.

\bibitem{ShlezingerICASSP2020}
N. Shlezinger, M. Chen, Y. C. Eldar, H. V. Poor and S. Cui, ``Federated learning with quantization constraints,'' \emph{IEEE Int. Conf. Acoust., Speech Signal Process. (ICASSP)}, Barcelona, Spain, 2020, pp. 8851-8855.

\bibitem{ZhuTWC2020}
G. Zhu, Y. Wang, and K. Huang, ``Broadband analog aggregation for low-latency federated edge learning,'' \emph{IEEE Trans. Wireless Commun.}, vol. 19, no. 1, pp. 491–506, Jan. 2020.


\bibitem{YangTWC2020}
K. Yang, T. Jiang, Y. Shi, and Z. Ding, ``Federated learning via over-the-air computation,'' \emph{IEEE Trans. Wireless Commun.}, vol. 19, no. 3, pp. 2022–2035, Mar. 2020.

\bibitem{CaoIEEEWC2024}
X. Cao, Z. Lyu, G. Zhu, J. Xu, L. Xu and S. Cui, ``An overview on over-the-air federated edge learning,'' \emph{IEEE Wireless Commun.}, vol. 31, no. 3, pp. 202-210, June 2024.

\bibitem{LiuJSAC2021}
D. Liu and O. Simeone, ``Privacy for free: Wireless federated learning via uncoded transmission with adaptive power control,'' \emph{IEEE J. Sel. Areas Commun.}, vol. 39, no. 1, pp. 170-185, Jan. 2021.

\bibitem{MohammadiTSP2020}
M. Mohammadi Amiri and D. D. Gündüz, ``Machine learning at the wireless edge: Distributed stochastic gradient descent
over-the-air,'' \emph{IEEE Trans. Signal Process.}, vol. 68, pp. 2155–2169, 2020.

\bibitem{CaoJSAC2022}
X. Cao, G. Zhu, J. Xu, Z. Wang, and S. Cui, ``Optimized power control design for over-the-air federated edge learning,'' \emph{IEEE J. Sel. Areas Commun.}, vol. 40, no. 1, pp. 342–358, Jan. 2022.

\bibitem{ZhangTWC2021}
N. Zhang and M. Tao, ``Gradient statistics aware power control for over-the-air federated learning,'' \emph{IEEE Trans. Wireless Commun.}, vol. 20, no. 8, pp. 5115–5128, Aug. 2021.


\bibitem{LiTWC2024}
L. Li et al., ``Energy and spectrum efficient federated learning via high-precision over-the-air computation,'' \emph{ IEEE Trans. Wireless Commun.}, vol. 23, no. 2, pp. 1228-1242, Feb. 2024.

\bibitem{GafniTSP2024}
T. Gafni, K. Cohen and Y. C. Eldar, ``Federated learning from heterogeneous data via controlled air aggregation with bayesian estimation,'' \emph{IEEE Trans. Signal Process.}, vol. 72, pp. 1928-1943, 2024.

\bibitem{CaoJSAC2022T}
X. Cao, G. Zhu, J. Xu, and S. Cui, “Transmission power control for over-the-air federated averaging at network edge,'' \emph{IEEE J. Sel. Areas Commun.}, vol. 40, no. 5, pp. 1571–1586, May 2022.

\bibitem{QiTcomm2022}
Q. Qi, X. Chen, A. Khalili, C. Zhong, Z. Zhang and D. W. K. Ng, “Integrating sensing, computing, and communication in 6G wireless networks: Design and optimization,” \emph{IEEE Trans. Commun.}, vol. 70, no. 9, pp. 6212-6227, Sept. 2022.

\bibitem{ZhaoTWC2022}
L. Zhao, D. Wu, L. Zhou and Y. Qian, “Radio Resource Allocation for Integrated Sensing, Communication, and Computation Networks,” \emph{IEEE Trans. Wireless Commun.}, vol. 21, no. 10, pp. 8675-8687, Oct. 2022.


\bibitem{HeTWC2024}
Y. He, G. Yu, Y. Cai and H. Luo, “Integrated sensing, computation, and communication: System framework and performance optimization,” \emph{IEEE Trans. Wireless Commun}, vol. 23, no. 2, pp. 1114-1128, Feb. 2024.

\bibitem{WenTWC2023}
D. Wen et al., “Task-oriented sensing, computation, and communication integration for multi-device edge AI,” \emph{IEEE Trans. Wireless Commun.}, vol. 23, no. 3, pp. 2486-2502, Mar. 2024.

\bibitem{LiTWC2023}
X. Li, G. Feng, Y. Liu, S. Qin and Z. Zhang, “Joint sensing, communication, and computation in mobile crowdsensing enabled edge networks,” \emph{IEEE Trans. Wireless Commun.}, vol. 22, no. 4, pp. 2818-2832, Apr. 2023.

\bibitem{LiuJSTSP2023}
P. Liu et al., “Toward ambient intelligence: Federated edge learning with task-oriented sensing, computation, and communication integration,” \emph{IEEE J. Sel. Topics Signal Process.}, vol. 17, no. 1, pp. 158-172, Jan. 2023.

\bibitem{LiangICC2023}
Y. Liang, Q. Chen, G. Zhu and H. Jiang, “Theoretical analysis and performance evaluation for federated edge learning with integrated sensing, communication and computation,” in \emph{Proc. IEEE Int. Conf. Commun. Workshops (ICC Workshops)}, Rome, Italy, 2023, pp. 592-598.

\bibitem{WangJMLR2021}
J. Wang and G. Joshi, “Cooperative SGD: A unified framework for the design and analysis of communication-efficient SGD algorithms,” \emph{J. Mach. Learn. Res.}, vol. 22, no. 213, pp. 9709-9758, Jan. 2021.

\bibitem{YuICML2019}
Hao Yu, Rong Jin, and Sen Yang, “On the linear speedup analysis of communication efficient momentum SGD for distributed non-convex optimization,” in \emph{ Proc. Int. Conf. Mach. Learn. (ICML)}, vol 97, pp. 7184-7193, 2019.

\bibitem{WangNeuIPS2020}
J. Wang, Q. Liu, H. Liang, G. Joshi, and H. V. Poor, “Tackling the objective inconsistency problem in heterogeneous federated optimization,” in \emph{ Proc. Int. Conf. Neural Inf. Process. Syst. (NeuIPS)}, No. 638, pp. 7611 - 7623, Dec. 2020.

\bibitem{MaoJSAC2016}
Y. Mao, J. Zhang and K. B. Letaief, ``Dynamic computation offloading for mobile-edge computing with energy harvesting devices,'' \emph{IEEE J. Sel. Areas Commun.}, vol. 34, no. 12, pp. 3590-3605, Dec. 2016.

\end{thebibliography}
\end{document}